%% file: neurips_2026.tex
\documentclass{article}

\PassOptionsToPackage{numbers, compress}{natbib}

\usepackage[preprint]{neurips_2026}

\usepackage[utf8]{inputenc} 
\usepackage[T1]{fontenc}    
\usepackage{hyperref}       
\usepackage{url}            
\usepackage{booktabs}       
\usepackage{amsfonts}       
\usepackage{nicefrac}       
\usepackage{microtype}      
\usepackage{xcolor}         

\usepackage{inconsolata}
\usepackage{adjustbox}
\usepackage{amsmath}
\usepackage{bm}

\usepackage{enumitem}
\usepackage{amssymb}
\usepackage{amsthm}
\usepackage{cleveref}
\usepackage{wrapfig}
\usepackage{stackengine}
\usepackage{comment}
\usepackage{capt-of}
\usepackage{mathtools}
\usepackage{caption}
\usepackage{tikz}
\usepackage{xspace}
\usepackage{graphicx}

\newcommand{\appref}[1]{\textcolor{black}{\hyperref[#1]{Appendix~\ref*{#1}}}}

\theoremstyle{plain}

\newtheorem{lemma}{Lemma}
\newtheorem{corollary}{Corollary}

\newtheorem{proposition}{Proposition}

\theoremstyle{remark}

\definecolor{LightSteelBlue1}{RGB}{202,225,255}
\definecolor{LightPink}{RGB}{245,191,210}
\definecolor{Moccasin}{RGB}{255, 228, 181}

\definecolor{LightSteelBlue1}{RGB}{202,225,255}
\definecolor{LightPink}{RGB}{245,191,210}
\definecolor{Moccasin}{RGB}{255, 228, 181}
\setenumerate[1]{itemsep=0pt,partopsep=0pt,parsep=\parskip,topsep=0pt}
\setitemize[1]{itemsep=0pt,partopsep=0pt,parsep=\parskip,topsep=0pt}
\setdescription{itemsep=0pt,partopsep=0pt,parsep=\parskip,topsep=0pt}

\definecolor{cmzhao}{rgb}{0.1, 0.8, 0.1}

\newcommand{\mtd}{SMoA\xspace}

\title{SMoA: Spectrum Modulation Adapter for Parameter-Efficient Fine-Tuning}

\author{
  Yongkang Liu$^{1}$
  Xing Li$^{1}$
  Mengjie Zhao$^{1}$\thanks{Corresponding authors.}\hspace{0.3em}
  Shanru Zhang$^{1}$
  Zijing Wang$^{1}$ \\
  \textbf{Qian Li}$^{2}$
  \textbf{Shi Feng}$^{1}$
  \textbf{Feiliang Ren}$^{1}$
  \textbf{Daling Wang}$^{1}$
  \textbf{and Hinrich Schütze$^{3,4}$} \\
  $^{1}$Northeastern University, China; $^{2}$Shandong University, China;\\
  $^{3}$CIS, LMU Munich, Germany;$^{4}$MCML, Germany
}

\begin{document}

\maketitle

\begin{abstract}
As the number of model parameters increases, parameter-efficient fine-tuning (PEFT)
has become the go-to choice for tailoring pre-trained large language models. Low-rank Adaptation 
(LoRA) uses a low-rank update method to simulate full parameter fine-tuning, which is widely 
used to reduce resource requirements. However, decreasing the rank encounters challenges with 
limited representational capacity. Theory suggests that LoRA fine-tuning with rank $r$ converges toward the top $r$ singular values of the pre-trained weight matrix. As the rank increases, more principal singular directions are preserved, which generally improves the model’s performance. However, a larger rank also introduces more trainable parameters, leading to higher computational cost. To overcome this dilemma, we propose SMoA, a \textbf{S}pectrum \textbf{Mo}dulation \textbf{A}dapter that enlarges the accessible family of spectrum-aware updates under a smaller parameter budget. SMoA partitions the layer into multiple aligned spectral blocks and applies one in-block Hadamard-modulated low-rank branch to each diagonal block, yielding broader coverage of pretrained spectral directions. We provide theoretical analysis and empirical results on multiple tasks. In our experiments, SMoA improves average performance in the current lower-budget setting over LoRA and competitive LoRA-style baselines.
\end{abstract}

\section{Introduction}

Large language models (LLMs)~\cite{bai2023qwen,liu2024deepseek,zhang2022opt,achiam2023gpt} have demonstrated remarkable performance improvements across a wide range of natural language 
processing tasks. Fine-tuning (FT) is a standard approach for adapting LLMs to specific downstream tasks~\cite{ouyang2022training,weifinetuned,liu2025look}. However, the sheer scale of modern LLMs makes full-parameter FT prohibitively expensive, especially in resource-constrained environments~\cite{hoffmann2022training,kaplan2020scaling}. Parameter-Efficient Fine-Tuning (PEFT) is introduced to reduce this cost~\cite{houlsby2019parameter,li2021prefix,lester2021power,liu2024gpt,hu2022lora,hayou2024lora+}. PEFT lowers the memory and optimization burden by restricting training to a small set of additional parameters.

Low-rank adaptation~\cite{hu2022lora} is widely used because it introduces very little additional memory overhead and no extra inference latency. However, previous studies~\cite{jiang2024mora, liu2024dora, zhuang2024time} have shown that LoRA and most of its variants~\cite{lialin2023relora, hayou2024lora+} can struggle on complex tasks such as mathematical reasoning and learning new knowledge~\cite{liu2025look}. Since a rank-$r$ LoRA update is restricted to rank at most $r$, adaptation is confined to a limited set of global spectral directions~\cite{rushka2026rank}.
Gradient-flow theory shows that, for low-rank approximation objectives, rank-$r$ LoRA converges to the optimal rank-$r$ truncated SVD solution, so the residual error is governed by the singular-value tail beyond rank $r$~\cite{rushka2026rank}. Meanwhile, random-matrix analyses suggest that transformer weights often have long-tailed spectra and numerical rank close to $\min(d_{\text{out}},d_{\text{in}})$, with even small singular directions remaining functionally relevant~\cite{staats2024small}. These results suggest a structural limitation of a single global LoRA branch: when useful signal persists in the spectral tail, restricting the update to rank $r$ leaves potentially informative directions under-modeled.

A natural response is to enlarge the accessible update family beyond a single global low-rank branch. Existing methods such as ReLoRA~\cite{lialin2023stack}, COLA~\cite{xia2024chain}, MELoRA~\cite{ren2024melora}, and HiRA~\cite{huang2025hira} move in this direction by stacking, composing, or modulating LoRA-style updates. However, they do not explicitly organize adaptation around the spectral structure of the pretrained operator, which may limit their ability to capture informative tail directions under a small parameter budget.

In this paper, we propose a \textbf{S}pectrum \textbf{Mo}dulation \textbf{A}dapter (SMoA) that explicitly targets this spectral regime. SMoA partitions both the output and input dimensions into $K$ aligned spectral blocks, applies one Hadamard-modulated low-rank branch to each diagonal block, and combines the resulting local updates into one global block-diagonal residual. This construction distributes the PEFT budget across 
$K$
spectrum-aware anchors, allowing the update to interact with a broader collection of singular directions than a single global rank-$r$ factorization. Under near-full-rank block structure and $K \ll \min(d_{\text{out}},d_{\text{in}})$
, our analysis shows that SMoA admits a strictly larger rank ceiling than rank-$r$ LoRA. We evaluate SMoA on Llama-2-7B and Llama-3-8B across commonsense reasoning, dialogue generation, and mathematical reasoning tasks. The results suggest stronger average performance in the lower-budget setting considered here.

We summarize our contributions as follows: \textbf{(1)} We propose SMoA, which partitions both dimensions of a pretrained operator into aligned spectral blocks and applies one Hadamard-modulated low-rank branch of local rank $r/K$ to each diagonal block.
\textbf{(2)} We theoretically characterize the parameter complexity and rank ceiling of SMoA, show that under near-full-rank block structure its rank ceiling is strictly larger than that of rank-$r$ LoRA, identify a family of block-aligned modulated targets that are representable by SMoA but not by rank-$r$ LoRA, and show that on this witness family rank-$r$ LoRA suffers an irreducible spectral approximation gap while SMoA attains zero error.
\textbf{(3)} Experimental results suggest that SMoA attains stronger average performance than LoRA and remains competitive with strong LoRA variants across multiple benchmarks in the lower-budget setting studied here.


\begin{figure*}[t!]
    \centering   \includegraphics[width=.9\textwidth]{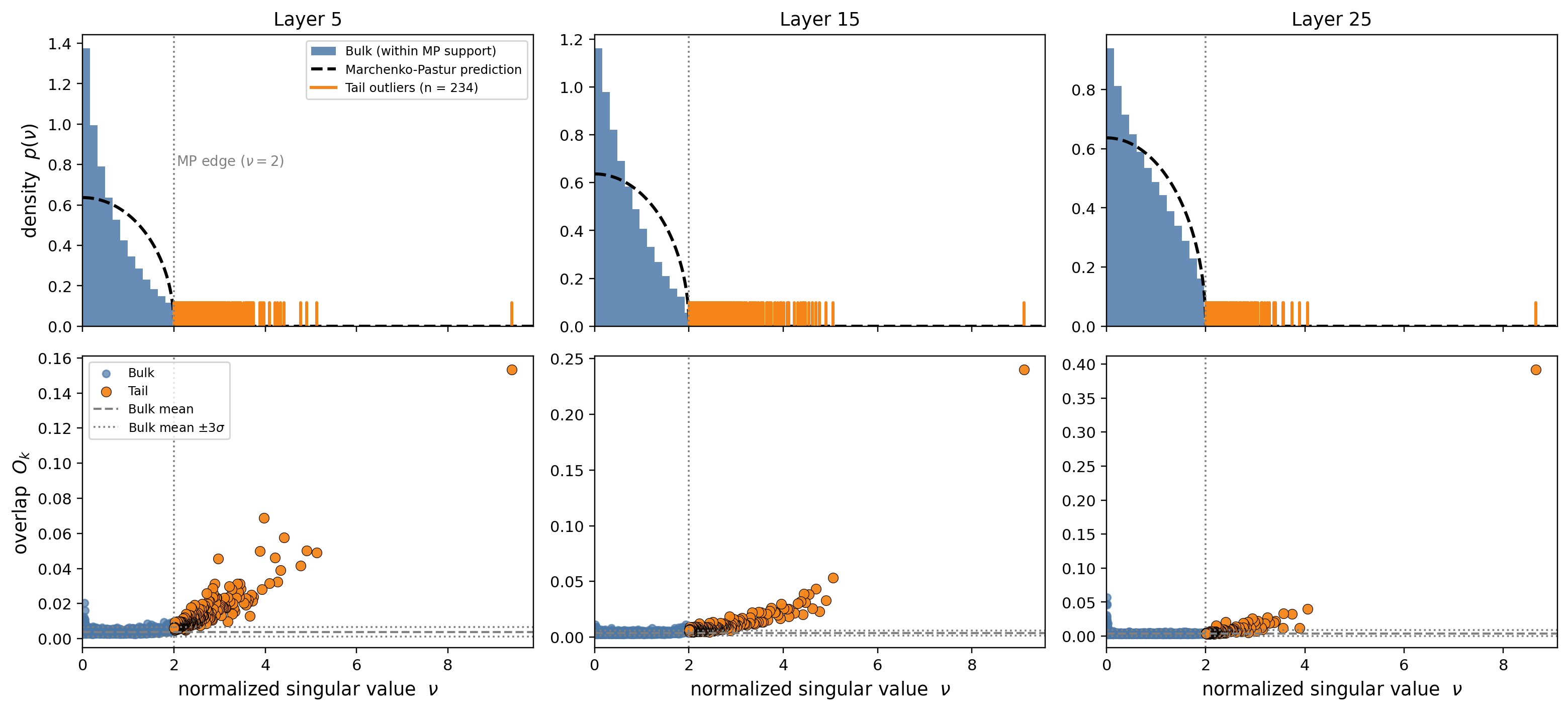}
    \caption{
    \textbf{Pretrained weights have informative spectral tails.}
    Top: Distribution of normalized singular values $\nu$ for Llama-2-7B's layer 5, 15, and 25. The black dashed curve shows Marchenko–Pastur prediction for a random matrix of the same shape; values beyond the bulk edge
    (vertical line) 
    are tail outliers encoding structured and task-relevant information.
    Bottom: Maximum overlap $\mathcal{O}_k$ between right singular vectors and activation-covariance eigenvectors. Tail singular directions (orange) still show substantially larger $\mathcal{O}_k$, indicating alignment with task-induced activation structure. Tail directions are therefore not noise but carry functionally relevant signal.
    }
    \label{fig:SpectraMatrices}
    \vspace{-.5cm}
\end{figure*}

\section{Related Work}

The scale of modern large language models (LLMs) has made parameter-efficient fine-tuning (PEFT) a central research topic~\cite{dettmers2023qlora,han2024parameter}. PEFT aims to approach the performance of full fine-tuning while updating only a small subset of model parameters~\cite{liu2025look}. Existing methods are commonly grouped into prompt-based~\cite{li2021prefix,liu2022p}, adapter-based~\cite{houlsby2019parameter}, selection-based~\cite{zaken2022bitfit}, and low-rank adaptation approaches~\cite{zhang2023adalora}, among which low-rank adaptation has become one of the dominant paradigms.

LoRA~\cite{hu2022lora} represents weight updates as the product of two low-rank matrices, achieving strong efficiency without increasing inference cost. Building on this framework, VeRA~\cite{kopiczko2023vera} learns only scaling vectors on top of frozen random bases, DoRA~\cite{liu2024dora} decouples direction and magnitude in the update, FLoRA~\cite{wenbatched} introduces Hadamard-product-based example-specific adaptation, and SSMLoRA~\cite{yu2025ssmlora} connects low-rank adapters across layers via a state-space mechanism.

Another line of work aims to improve the expressive capacity of low-rank adaptation. MoRA~\cite{jiang2024mora} replaces the standard low-rank update with a square higher-rank matrix. MELoRA~\cite{ren2024melora} ensembles multiple small low-rank adapters, while HiRA~\cite{huang2025hira} employs element-wise products to realize effectively higher-rank task-specific updates. CoTo~\cite{zhuang2025come} improves adaptation by progressively expanding adapter activation during training, encouraging broader exploration of the optimization landscape.

Our work is closest in spirit to methods that seek to overcome the limitations of a single global low-rank branch, but differs in both motivation and construction. Rather than increasing rank through more complex parameterizations or combining multiple LoRA modules, SMoA is motivated by the spectral structure of pretrained Transformer weights, which are often long-tailed and numerically near full-rank. Accordingly, SMoA performs spectrum-aware reordering and allocates adaptation across multiple aligned local subspaces. Each local branch is anchored by a pretrained spectral block and modulated via a Hadamard product, enabling rank accumulation across blocks under a structured spectrum-aware inductive bias. This makes SMoA a spectrum-aware alternative to standard LoRA-style global low-rank adaptation.

\section{Methodology}
\label{sec:method}
We first explain why pretrained transformer weights are better viewed as numerically near full-rank matrices with informative spectral tails than as effectively low-rank operators. This motivates an adapter that can preserve or modulate more than only a few leading singular directions. We then construct SMoA by distributing a small PEFT budget across aligned spectral blocks of the frozen weight, and finally analyze its parameter count, potential rank ceiling, and expressivity relative to a rank-$r$ LoRA layer.

\subsection{Spectral Modulation}

\begin{wrapfigure}{r}{0.6\textwidth}
    \vspace{-1.5em}
  \centering
  \includegraphics[width=0.7\textwidth]{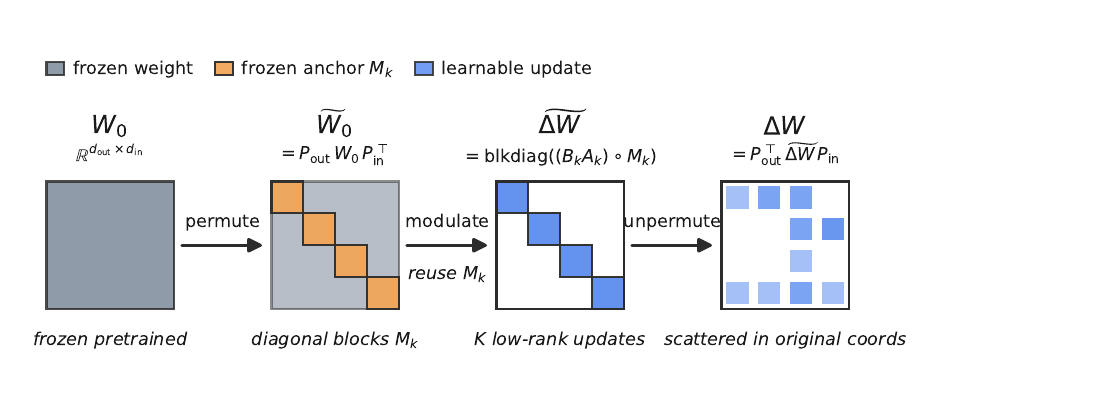}
  \captionsetup{skip=-1em} 
  \caption{Overview of \mtd pipeline (when $K=4$).}
  \label{fig:smoa_pipeline}
  \vspace{-1em}
\end{wrapfigure}

We illustrate the informative spectral tails of pretrained model weights in Figure~\ref{fig:SpectraMatrices},
following the random matrix-based analysis of Transformer models \cite{staats2024small}.
Figure~\ref{fig:SpectraMatrices} top images highlight that a pretrained weight's singular-value spectrum has a heavy tail: 
a substantial fraction of singular values lies beyond the Marchenko-Pastur bulk \cite{marvcenko1967distribution} predicted for a same-size random matrix, confirming that the pretrained weight indeed encodes task-relevant information in the spectrum tail. Bottom images further confirm that directions on the tail are not random noise and they have right singular vectors aligning more (exceeding $3\sigma$ intervals) with the activation-covariance eigenvectors (see below) than the bulk directions. Overall, these observations indicate that pretrained weights are numerically near full-rank with informative tails, consistent with prior random-matrix analyses \cite{staats2024small}.

We record above empirical regularity as Lemma~\ref{lem:empirical-near-full-rank} (\appref{app:proof-lemma1}).
If informative directions extend into the spectral tail, restricting the update to a rank-$r$ subspace leaves potentially useful directions under-modeled.
Motivated by this, SMoA is designed to preserve and exploit a broader range of spectrum-aware structure than a single global low-rank branch. For a frozen linear layer $W_0 \in \mathbb{R}^{d_{\text{out}} \times d_{\text{in}}}$, we construct SMoA in three steps shown in Figure~\ref{fig:smoa_pipeline}: (i) reorder the coordinates once according to the frozen spectral structure of $W_0$, (ii) build one Hadamard-modulated low-rank branch on each aligned diagonal block, and (iii) map the resulting block-diagonal update back to the original coordinates. Additional preprocessing details appear in~\appref{app:method-preprocess}, while the deferred theoretical statements and proofs appear in~\appref{app:proof-full-rank-cor},~\appref{app:proof-expressivity-separation}, and~\appref{app:proof-spectral-gap}.

\paragraph{Spectrum-aware preprocessing.}
Let $W_0 = U \Sigma V^\top$ be the singular value decomposition of the pretrained weight. We use this frozen spectral structure only once to define output and input permutations, denoted by $\pi_{\text{out}}$ and $\pi_{\text{in}}$, that reorder the coordinates into $K$ spectrum-aligned contiguous groups. 
The corresponding permutation matrices are
\small
\begin{equation}
(P_{\mathrm{out}})_{ij} = \mathbf{1}[j = \pi_{\mathrm{out}}(i)]
    \;\; \text{for } i,j \in \{1, \dots, d_{\mathrm{out}}\},
\quad
(P_{\mathrm{in}})_{ij} = \mathbf{1}[j = \pi_{\mathrm{in}}(i)]
  \;\; \text{for } i,j \in \{1, \dots, d_{\mathrm{in}}\}.
\label{eq:perm-matrices}
\end{equation}
\normalsize


The reordered weight matrix
$\widetilde{W}_0 \in \mathbb{R}^{d_{\mathrm{out}} \times d_{\mathrm{in}}}$ is then
\small
\begin{equation}
  \widetilde{W}_0 = P_{\mathrm{out}} W_0 P_{\mathrm{in}}^\top,
  \qquad
  (\widetilde{W}_0)_{ij} = (W_0)_{\pi_{\mathrm{out}}(i),\, \pi_{\mathrm{in}}(j)}
    \;\; \text{for } i \in \{1, \dots, d_{\mathrm{out}}\},\; j \in \{1, \dots, d_{\mathrm{in}}\}.
  \label{eq:reordered-weight}
\end{equation}
\normalsize

This reordering leaves the layer computation unchanged; it only exposes a coordinate system in which related spectral coordinates become contiguous.

\paragraph{Block anchors.}
SMoA places its local branches on 
aligned
diagonal blocks of $\widetilde W_0$. For exposition, assume that $K$ divides both $d_{\text{out}}$ and $d_{\text{in}}$, and let $\mathcal{O}_k$ and $\mathcal{I}_k$ denote the $k$-th contiguous row and column intervals after reordering, each of size $d_{\text{out}}/K$ and $d_{\text{in}}/K$, respectively. The $k$-th diagonal anchor is then
$
 M_k = \widetilde W_0[\mathcal{O}_k,\mathcal{I}_k]
 \in \mathbb{R}^{\frac{d_{\text{out}}}{K}\times \frac{d_{\text{in}}}{K}}.
$ (see orange blocks in Figure~\ref{fig:smoa_pipeline}).
We refer to $\{M_k\}_{k=1}^{K}$ as the reordered block anchors of $W_0$. These anchors are fixed after preprocessing and are not updated during fine-tuning. Note that $\widetilde W_0$ itself need not be block diagonal; SMoA only uses its aligned diagonal blocks as local spectral anchors. Additional indexing details are deferred to~\appref{app:method-preprocess}.

\paragraph{Adapter update.}
Let the reference LoRA baseline use a global rank budget $r$. To preserve the same total rank budget while distributing adaptation capacity across $K$ spectrum-aligned blocks, we define the \textbf{local block rank} as $\rho = \frac{r}{K}$. SMoA then attaches one local branch of rank $\rho$ to each diagonal anchor block.
Concretely, for each
block $M_k$, 
we introduce learnable low-rank factors
$
    A_k \in \mathbb{R}^{\rho\times \frac{d_{\text{in}}}{K}},
    B_k \in \mathbb{R}^{\frac{d_{\text{out}}}{K}\times \rho},
$
and define the corresponding Hadamard-modulated local update by
$
    \Delta M_k = (B_kA_k)\odot M_k.
$
The full update in reordered coordinates is
\begin{equation}
\widetilde{\Delta W}
=
\operatorname{blkdiag}(\Delta M_1,\dots,\Delta M_K)
=
\operatorname{blkdiag}\big((B_1A_1)\odot M_1,\dots,(B_KA_K)\odot M_K\big).
\end{equation}
Scattering
this update back to the original coordinates yields
\small
\begin{equation}
\Delta W = P_{\text{out}}^\top\, \widetilde{\Delta W}\, P_{\text{in}},
\qquad
W = W_0 + \Delta W.
\end{equation}
\normalsize

Thus, $(P_{\text{out}},P_{\text{in}},\{M_k\}_{k=1}^{K})$ remain fixed, while the trainable parameters are $\{A_k,B_k\}_{k=1}^{K}$. Compared with LoRA, SMoA distributes the adaptation budget across $K$ local spectrum-aligned branches instead of a single global low-rank branch. 

\subsection{Analysis: Spectrum-Aware Rank Capacity and Expressivity}
\label{sec:32}

We next compare SMoA with a standard rank-$r$ LoRA layer in parameter count and achievable rank capacity; fuller witness-family separation results are deferred to~\appref{app:proof-expressivity-separation} and~\appref{app:proof-spectral-gap}.

\paragraph{Parameter complexity.}

Compared with the $r(d_{\text{in}}+d_{\text{out}})$ trainable parameters of a rank-$r$ LoRA layer, SMoA with $K$ blocks and local rank $\rho=\frac{r}{K}$ uses
\small
\begin{equation}
\label{eq:param-match}
P_{\mathrm{SMoA}}
=
\sum_{k=1}^{K}
\left(
\rho\frac{d_{\text{in}}}{K}
+
\rho\frac{d_{\text{out}}}{K}
\right)
=
\frac{r}{K}(d_{\text{in}}+d_{\text{out}})
=
\frac{1}{K}r(d_{\text{in}}+d_{\text{out}}),
\end{equation}
\normalsize
that is, only a $\frac{1}{K}$ fraction of the trainable parameters required by rank-$r$ LoRA.
Next, we show the main rank-capacity consequence of \mtd's block-wise construction.

\begin{proposition}[{Spectrum-Aware Rank Ceiling}]
\label{prop:smoa-rank-ceiling}

Let
\begin{equation}
\widetilde{\Delta W}
=
\operatorname{blkdiag}(\Delta M_1,\dots,\Delta M_K),
\qquad
\Delta M_k=(B_kA_k)\odot M_k,
\qquad
\rho=\frac{r}{K}.
\end{equation}
Here $A_k\in\mathbb{R}^{\rho\times \frac{d_{\text{in}}}{K}}$, $B_k\in\mathbb{R}^{\frac{d_{\text{out}}}{K}\times \rho}$, $M_k\in\mathbb{R}^{\frac{d_{\text{out}}}{K}\times \frac{d_{\text{in}}}{K}}$, and $s_k=\min\!\left(\frac{d_{\text{out}}}{K},\frac{d_{\text{in}}}{K}\right)$.
Then, for each block,
\begin{equation}
\operatorname{rank}(\Delta M_k)
\le
\min\!\left(
s_k,\;
\rho\,\operatorname{rank}(M_k)
\right).
\end{equation}
Consequently, the full SMoA update satisfies
\begin{equation}
\label{eq:merged-rank-ceiling}
\operatorname{rank}(\Delta W)
=
\operatorname{rank}(\widetilde{\Delta W})
\le
U
:=
\sum_{k=1}^{K}
\min\!\left(
s_k,\;
\rho\,\operatorname{rank}(M_k)
\right)
\le
\min(d_{\text{out}},d_{\text{in}}).
\end{equation}
In contrast, a standard rank-$r$ LoRA update always satisfies $\operatorname{rank}(\Delta W_{\mathrm{LoRA}})\le r$. Therefore
\begin{equation}
\label{eq:merged-separation-criterion}
U>r,
\end{equation}
SMoA admits a strictly larger analytic rank ceiling than rank-$r$ LoRA.

\end{proposition}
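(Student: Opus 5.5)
The argument is essentially linear algebra, and I would organize it in three layers: a per-block bound, aggregation over blocks, and the comparison with LoRA.

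\textbf{Per-block bound.} The key tool is submultiplicativity of rank under the Hadamard product, $\operatorname{rank}(X\odot Y)\le \operatorname{rank}(X)\operatorname{rank}(Y)$. I would prove it directly. Write $X=\sum_{i=1}^{a}x_i y_i^\top$ and $Y=\sum_{j=1}^{b}u_j v_j^\top$ as sums of rank-one terms, with $a=\operatorname{rank}(X)$ and $b=\operatorname{rank}(Y)$. Then
\[
X\odot Y=\sum_{i,j}(x_i\odot u_j)(y_i\odot v_j)^\top,
\]
since the Hadamard product of two rank-one matrices is again rank-one. This expresses $X\odot Y$ as a sum of at most $ab$ rank-one matrices. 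Applying it with $X=B_kA_k$, whose rank is at most $\rho$ because it factors through $\mathbb{R}^{\rho}$, and $Y=M_k$ gives $\operatorname{rank}(\Delta M_k)\le \rho\operatorname{rank}(M_k)$. The other term in the minimum is trivial: any matrix of size $\frac{d_{\text{out}}}{K}\times\frac{d_{\text{in}}}{K}$ has rank at most $s_k$.

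\textbf{Aggregation.} Since $P_{\text{out}}$ and $P_{\text{in}}$ are permutation matrices, they are invertible. Hence $\operatorname{rank}(\Delta W)=\operatorname{rank}(P_{\text{out}}^\top\widetilde{\Delta W}P_{\text{in}})=\operatorname{rank}(\widetilde{\Delta W})$. For a block-diagonal matrix, the column space is the direct sum of the column spaces of the diagonal blocks, because these occupy disjoint row coordinates. So $\operatorname{rank}(\widetilde{\Delta W})=\sum_k\operatorname{rank}(\Delta M_k)\le U$.

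For the final inequality, bound each summand by $s_k$. Then $\sum_k s_k = K\min\!\left(\frac{d_{\text{out}}}{K},\frac{d_{\text{in}}}{K}\right)=\min(d_{\text{out}},d_{\text{in}})$, using the divisibility assumption.

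\textbf{Comparison with LoRA.} This part is immediate: $\Delta W_{\mathrm{LoRA}}=BA$ with inner dimension $r$, so its rank is at most $r$. The closing claim should be read as a comparison of ceilings. If $U>r$, the upper bound available to SMoA exceeds the one binding LoRA.

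\textbf{Main obstacle.} The hard part is not the inequalities but the meaning of ``admits a strictly larger ceiling.'' The statement only compares upper bounds, so it needs no attainability argument. If one wanted attainability, choose generic $A_k,B_k$ such that each $\Delta M_k$ reaches rank $\min(s_k,\rho\operatorname{rank}M_k)$. That bound is not always tight, for example when $M_k$ has structured support, so I would not claim it here.
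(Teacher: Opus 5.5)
Your proposal is correct and follows essentially the same route as the paper's proof. Both arguments use the same steps: the Hadamard rank inequality applied to $B_kA_k$ (rank at most $\rho$) and $M_k$, the trivial size bound $s_k$, invariance of rank under the permutations, additivity of rank over diagonal blocks, $\sum_k s_k=\min(d_{\text{out}},d_{\text{in}})$ under the equal partition, and $\operatorname{rank}(BA)\le r$ for LoRA, with the conclusion read as a comparison of upper bounds whenever $U>r$. The only difference is that you prove the Hadamard inequality (via rank-one expansions) and the block additivity explicitly, whereas the paper cites them as standard facts.
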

Please refer to~\appref{app:proof-rank-ceiling} for the proof.
\appref{app:proof-full-rank-cor} Corollary~\ref{cor:near-full-rank-ceiling} also shows 
a useful special case:
under full-rank local anchors, this ceiling is strictly larger than that of a standard rank-$r$ LoRA update. Proposition~\ref{prop:expressivity-separation} constructs a block-aligned witness family that is exactly representable by SMoA but can fall outside the rank-$r$ LoRA family when its reordered rank exceeds $r$. Corollary~\ref{cor:spectral-gap} in~\appref{app:proof-spectral-gap} then shows that, on this same witness family, the best rank-$r$ LoRA approximation incurs the truncated-spectrum residual $\sum_{j>r}\sigma_j(\widetilde{\Delta W}^{\star})^2$, whereas SMoA attains zero error. This is consistent with the gradient-flow theory of \citet{rushka2026rank}: under Frobenius low-rank approximation objectives and spectral initialization, LoRA converges to the optimal rank-$r$ truncated-SVD solution, so the residual is a structural consequence of the restricted update family rather than merely an optimization artifact.

Taken together, Equation~\eqref{eq:merged-rank-ceiling}, Proposition~\ref{prop:smoa-rank-ceiling}, and the appendix results characterize the expressive advantage of SMoA from three complementary perspectives: a smaller parameter budget via Equation~\eqref{eq:param-match}, a larger analytic rank ceiling via Proposition~\ref{prop:smoa-rank-ceiling}, and a one-sided witness-family separation via Proposition~\ref{prop:expressivity-separation}. 
\appref{app:proof-expressivity-separation} and~\appref{app:proof-spectral-gap}
show the detailed witness construction and proofs.
Because SMoA remains block diagonal in the reordered coordinates, this advantage is structural rather than universal; if a task residual is dominated by strong cross-block interactions, the same inductive bias can become restrictive.

\begin{table*}[ht]
\captionsetup{skip=2pt}
\caption{Accuracy comparison among PEFT methods. Results for 
ChatGPT are sourced from~\cite{liu2024dora}. The best performance within is indicated in \textbf{bold},
while the second-best performance is highlighted with underlining. The reported numbers for SMoA are averaged accuracy (standard deviation).}
\label{tab:commonsense}
\begin{adjustbox}{max width=0.9\textwidth, center}
\includegraphics[width=\textwidth]{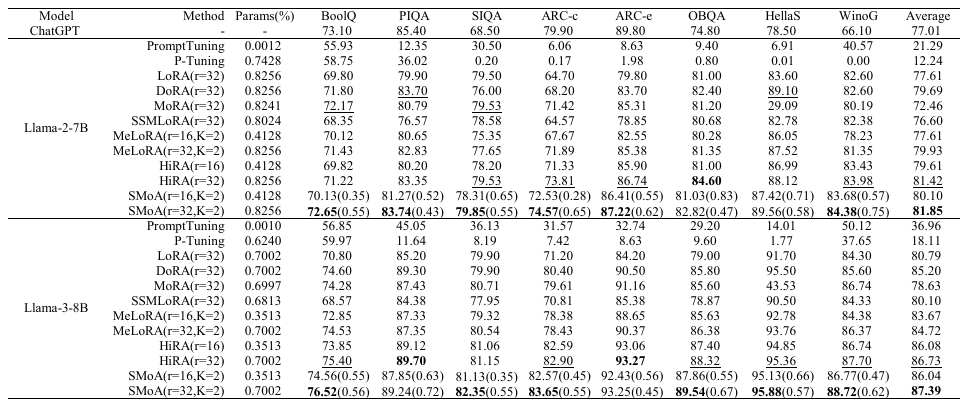}
\end{adjustbox}
\vspace{-2.0em}
\end{table*}

\section{Experiments}
\label{sec:experiments}

We conduct experiments on three task families: commonsense reasoning, dialogue generation, and mathematical reasoning. 
For commonsense reasoning, we evaluate on eight standard benchmarks: BoolQ~\cite{clark2019boolq}, PIQA~\cite{Bisk2020}, SIQA~\cite{sap2019socialiqa}, ARC-c~\cite{clark2018think}, ARC-e~\cite{clark2018think}, OBQA~\cite{mihaylov2018can}, HellaSwag~\cite{zellers2019hellaswag}, and WinoGrande~\cite{ai2:winogrande}. 
For dialogue generation and mathematical reasoning, we use ConvAI2~\cite{dinan2019second} and GSM8K~\cite{cobbe2021gsm8k}, respectively. 
We compare \textsc{SMoA} with representative PEFT methods, including prompt-based approaches and recent LoRA variants, and report the standard metrics for each task family. 
To keep the main text focused, we provide the complete experimental details in the appendix:~\appref{app:data} describes the datasets and task formulations;~\appref{app:baselines} introduces the compared baselines;~\appref{app:evaluation_metric} specifies the evaluation metrics and answer-extraction protocols; and~\appref{app:implementation_details} reports the training setup, implementation choices, and hyperparameter configurations.

\begin{table*}[ht]
\captionsetup{skip=-1pt}
\caption{Performance comparison among various PEFT methods on the CONVAI2 dataset using BLEU, BERTScore F1/Recall/Precision, METEOR, ROUGE-L, and average score.}
\label{tab:results2}
\begin{adjustbox}{max width=\textwidth, center}
\includegraphics[width=\textwidth]{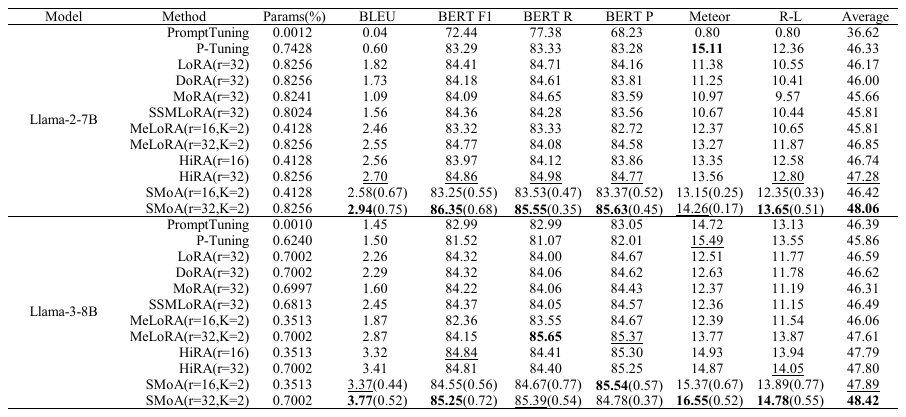}
\end{adjustbox}
\vspace{-2.5em}
\end{table*}

\subsection{Commonsense Reasoning Tasks}
Table~\ref{tab:commonsense} reports the accuracy comparison of various PEFT methods on a 
diverse set of commonsense reasoning benchmarks, including BoolQ, PIQA, SIQA, ARC-c, ARC-e, OBQA, 
HellaSwag, and WinoGrande, evaluated on the Llama-2-7B~\cite{touvron2023llama} and Llama-3-8B~\cite{grattafiori2024llama} backbones.
Across both backbones, SMoA($r=32$, $K=2$) gives the highest average accuracy in Table~\ref{tab:commonsense}. On Llama-2-7B, SMoA reaches 81.85\% on average, compared with 81.42\% for HiRA($r=32$) and 79.93\% for MeLoRA($r=32$, $K=2$). On Llama-3-8B, SMoA reaches 87.39\% on average, compared with 86.73\% for HiRA($r=32$) and 85.20\% for DoRA($r=32$). These gains should be read in the current lower-budget setting, since SMoA uses fewer trainable parameters than rank-$r$ LoRA. These results suggest that SMoA is most useful on tasks that benefit from broader \emph{structured} adaptation.

\begin{wraptable}{r}{0.40\columnwidth}
  \vspace{-0.5em}
  \centering
  \captionsetup{skip=2pt}
  \caption{Comparison of PEFT methods.}
    \label{tab:gsm8k}
    \includegraphics[width=\linewidth]{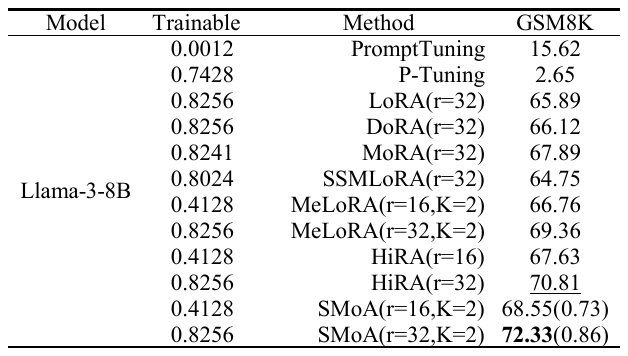}
\vspace{-2.0em}
\end{wraptable}

SMoA also remains competitive under more parameter-efficient settings. With $r=16$ and $K=2$, it attains an average accuracy of $80.10\%$ on Llama-2-7B while using substantially fewer trainable parameters than most rank-32 methods, yet still matching or exceeding several baselines. On Llama-3-8B, SMoA with $r=16$ reaches 86.04\% on average. These results suggest a favorable accuracy-efficiency trade-off.
Overall, the commonsense reasoning results indicate that SMoA is competitive across model scales and maintains a favorable balance of performance and parameter efficiency. 
We show that SMoA gives the best average performance in Table~\ref{tab:commonsense} while remaining competitive on most individual tasks.

\subsection{Conversational Task}
Table~\ref{tab:results2} reports a comprehensive comparison of various PEFT methods on 
the CONVAI2 dataset under two backbone models, Llama-2-7B and Llama-3-8B. 
Across both backbones, SMoA ($r=32$, $K=2$) yields the highest Average score, reaching 48.06\% on 
Llama-2-7B and 48.42\% on Llama-3-8B. This represents a clear improvement over strong baselines 
such as LoRA, HiRA, and MeLoRA with comparable parameter budgets. 
Overall, the results demonstrate that SMoA consistently achieves the best or near-best 
performance across most automatic evaluation metrics, validating its effectiveness for dialogue 
generation. Compared to LoRA and its variants (DoRA, MoRA, SSMLoRA), SMoA consistently delivers higher 
BLEU, METEOR, and ROUGE-L scores. This suggests that SMoA is more effective at capturing both 
surface-level n-gram overlap and longer-range semantic coherence.
Relative to HiRA and MeLoRA, which introduce hierarchical or multi-expert structures, SMoA 
demonstrates superior performance across metrics.
For example, although HiRA achieves strong BERT F1 scores, SMoA surpasses it in BLEU and METEOR, reflecting better lexical diversity and relevance 
in generated responses. 
This suggests that SMoA's multidimensional space adapter approach provides a more balanced 
trade-off between semantic fidelity and generation quality.

\subsection{Mathematical Reasoning Task}
We evaluate SMoA on GSM8K using Llama-3-8B as the backbone model. As shown in Table~\ref{tab:gsm8k}, SMoA reaches 72.33\% accuracy, compared with 65.89\% for LoRA, 66.12\% for DoRA, and 67.89\% for MoRA. Even under a smaller parameter budget ($r=16$, $K=2$), SMoA remains competitive at 68.55\%. These results suggest that SMoA adapts well to challenging tasks under limited resources.

Compared with LoRA and related methods, SMoA offers a useful balance between parameter efficiency and broader spectrum-aware adaptation, which may be particularly relevant for mathematical reasoning. Its performance is consistent with the view that broader spectrum-aware adaptation can enhance the model's capacity for mathematical reasoning. This observation should again be interpreted as task-dependent: the benefit comes from broader spectrum-aware adaptation, not from assuming that higher rank alone is always sufficient.

\begin{figure*}[t]
    \centering
    \includegraphics[width=\textwidth]{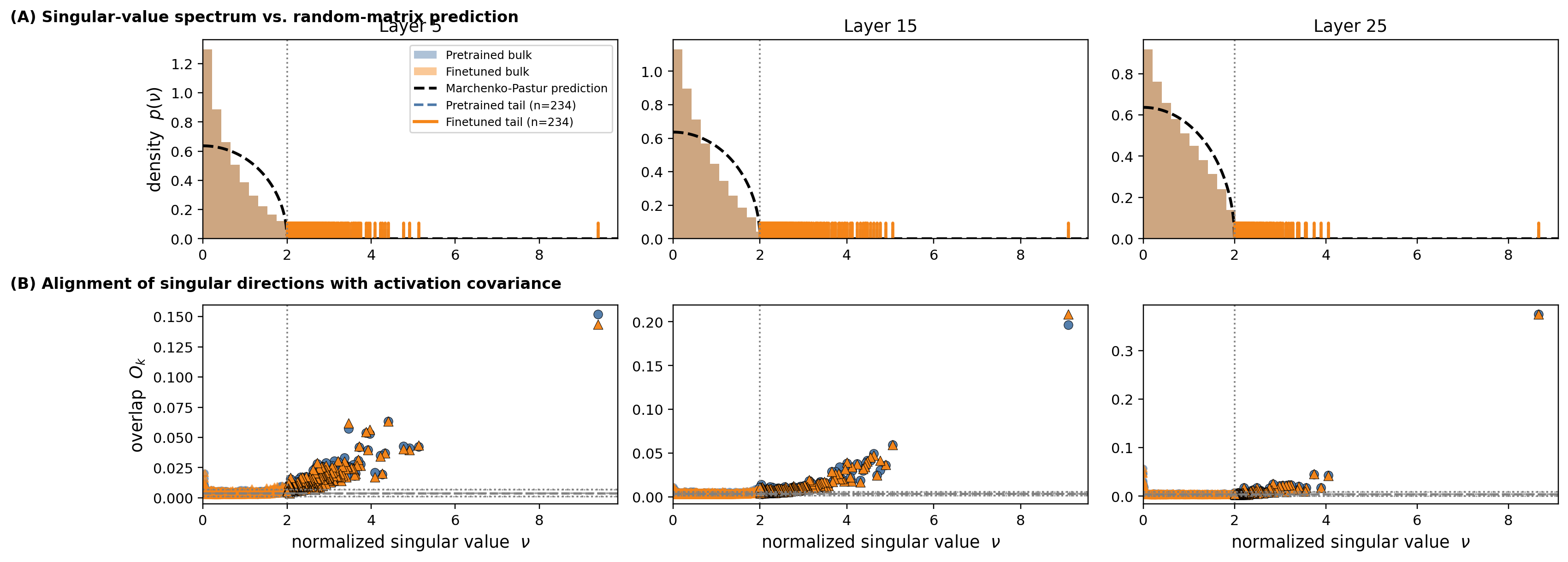}
    \captionsetup{skip=-1pt} 
    \caption{\textbf{Original Llama2-7B vs.\ SMoA finetuned spectra and overlaps on BoolQ for key weight matrix of attention.} We compare all layers before and after finetuning. The figure shows the 5th, 15th, and 25th layers, results for other layers are in the~\appref{sec:more_results}. Top: empirical distributions of normalized singular values $\nu$; light brown denotes the pretrained model and orange denotes the finetuned model. The dashed curve is the theoretical random-matrix bulk. Bottom: the corresponding maximum overlap $O_k$ between right singular vectors and activation-covariance eigenvectors, with circles for the pretrained model and triangles for the finetuned model. Gray lines indicate the bulk mean and $\pm 3\sigma$ intervals. The figure highlights how finetuning changes the right-tail spectrum and the alignment of tail singular directions with task-induced activation structure.}
    \label{fig:SpectraCompare}
    \vspace{-2em}
\end{figure*}

\subsection{Spectral Structure Comparison}

Figure~\ref{fig:SpectraCompare} reports three representative layers, while the complete layer-wise visualization is provided in \appref{sub:tail_spectrum_visualization}. 
A clear and consistent pattern emerges: compared with the pretrained model, whose spectrum is mostly confined to the random-matrix bulk, SMoA fine-tuning produces a pronounced spectral component beyond the bulk edge. 
This right-tail shift suggests that SMoA does not simply perturb dominant pretrained directions, but selectively activates underutilized singular directions that are likely to encode task-specific information.

The corresponding overlap scores provide functional evidence for this interpretation. 
Before fine-tuning, the singular directions exhibit weak alignment with the activation-covariance eigenspace, with most \(O_k\) values close to the bulk baseline. 
After SMoA fine-tuning, large overlaps appear mainly in the intermediate and tail spectral regions, indicating that the directions emphasized by SMoA are also more aligned with task-induced activation structure. 
Therefore, the spectral shift is not a random by-product of optimization, but reflects a structured adaptation mechanism.

These results highlight the advantage of SMoA: it preserves the pretrained spectral bulk while reallocating adaptation capacity to a small number of informative tail directions. 
This provides an explanation for why SMoA can achieve effective fine-tuning with limited trainable parameters. 
The same trend is observed across all layers in \appref{sub:tail_spectrum_visualization}, demonstrating that the phenomenon is systematic rather than specific to the selected layers.

\begin{table*}[t]
\captionsetup{skip=2pt}
\caption{Tail spectrum effectiveness ablation experiment. Ablation is performed only on the tail spectrum of the incremental weights $\Delta W$.}
\label{tab:tail_ablation}
\begin{adjustbox}{max width=0.9\textwidth, center}
\includegraphics[width=\textwidth]{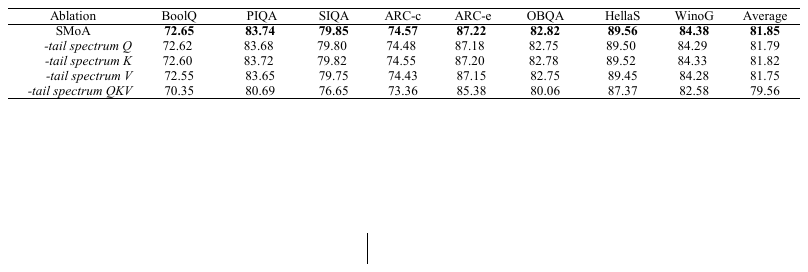}
\end{adjustbox}
\vspace{-2.0em}
\end{table*}
\begin{figure*}[t]
    \centering
    \includegraphics[width=0.85\textwidth]{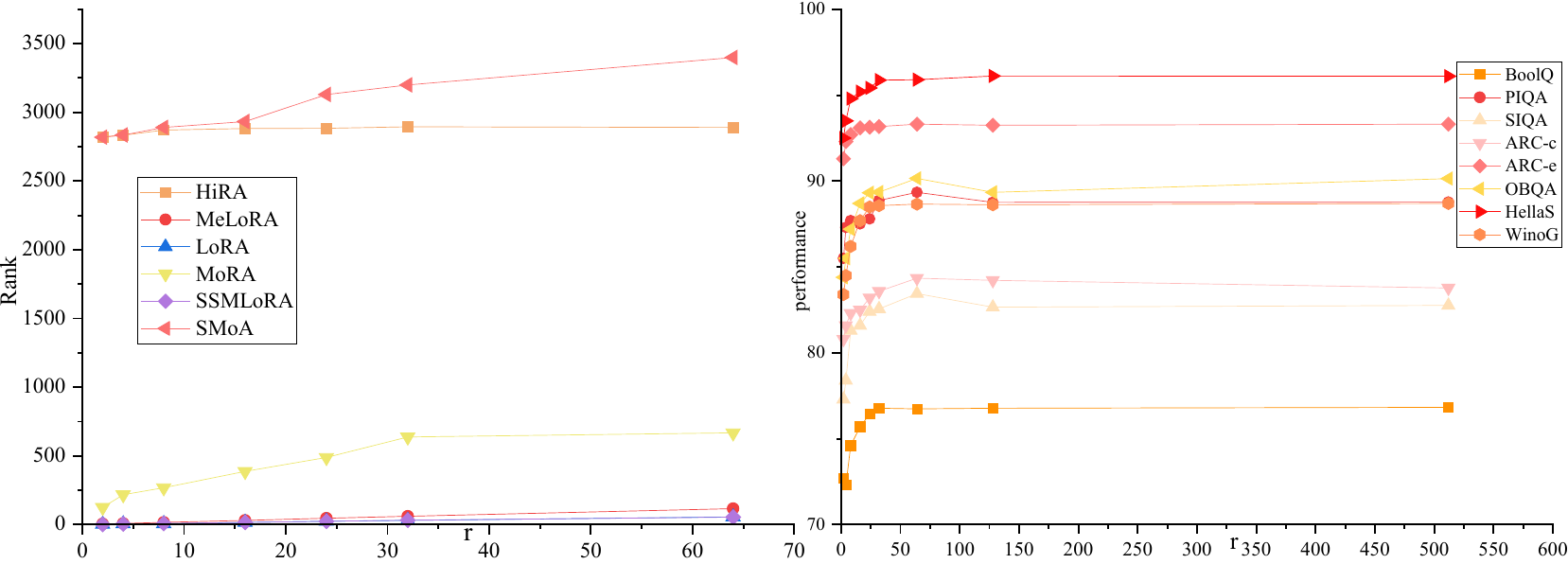}
    \captionsetup{skip=2pt} 
    \caption{Left: comparison of the measured update rank of $\Delta W$ for different PEFT methods after training. For MeLoRA and SMoA, the number of branches is set to 2. SMoA should be interpreted here as the lower-budget Hadamard-modulated block adapter defined in Section~\ref{sec:32}. Right: performance of SMoA across tasks as $r$ increases with $K=2$.}
    \label{fig:rank_pair}
    \vspace{-2em}
\end{figure*}
\subsection{Effectiveness of Tail Spectrum}

To further verify whether the tail singular spectrum contributes to the capability of SMoA, we conduct an ablation study by removing the tail-spectrum components from different attention projections. As shown in Table~\ref{tab:tail_ablation}, removing the tail spectrum from a single projection matrix, i.e., $Q$, $K$, or $V$, consistently leads to performance degradation across most benchmarks. The trend is highly consistent: the full SMoA model achieves the best or near-best performance on all tasks, while $-\textit{tail spectrum }Q$, $-\textit{tail spectrum }K$, and $-\textit{tail spectrum }V$ each slightly weaken the model.

More importantly, jointly removing the tail-spectrum components from $Q$, $K$, and $V$ causes a much larger degradation. For example, the average performance drops from $81.85\%$ for SMoA to $79.56\%$ after removing the tail spectrum from all three projections. This suggests that the tail spectrum is not merely redundant noise, but instead contains useful task-adaptive information that is distributed across multiple attention projections.
These results provide direct empirical evidence for our motivation: effective fine-tuning should not only exploit the dominant singular directions, but also preserve and adapt informative tail-spectrum components. The much stronger degradation under the joint $QKV$ ablation further indicates that the tail spectra of different projections play complementary roles in attention adaptation. Therefore, the superior performance of SMoA can be attributed, at least in part, to its ability to capture and utilize these otherwise overlooked tail-spectrum directions.

\begin{table*}[t]
\captionsetup{skip=2pt}
\caption{Performance on commonsense reasoning datasets under different SMoA block configurations $(r,K)$, using the same metrics as in Table~\ref{tab:commonsense}. Here $r$ denotes the reference LoRA rank and each SMoA block uses local rank $r/K$. Boldface indicates the best result for each metric.}
\label{tab:rank_analysis}
\begin{adjustbox}{max width=.9\textwidth, center}
\includegraphics[width=0.9\textwidth]{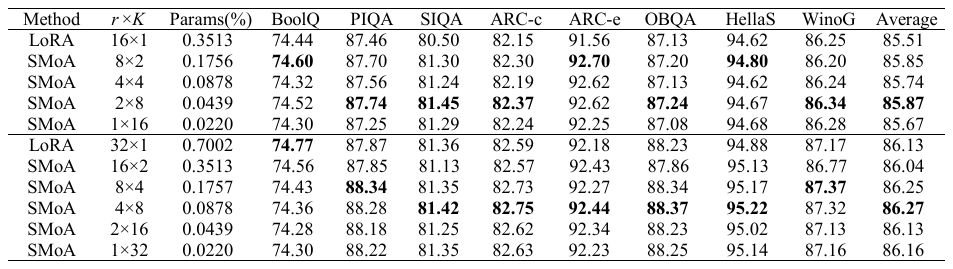}
\end{adjustbox}
\vspace{-2em}
\end{table*}

\subsection{Rank Analysis}

We compare the average rank of the update parameter matrix $\Delta W$ for various PEFT methods,
including HiRA, LoRA, MoRA, and SMoA, all of which employ different strategies for 
adapting to new tasks. As shown in Figure~\ref{fig:rank_pair} (left), SMoA attains significantly higher ranks for
$\Delta W$ across all values of $r$, indicating its superior ability to perform high-rank adaptation.
Compared to LoRA, MoRA, and MeLoRA, SMoA's rank increases dramatically as $r$ increases, 
suggesting that it is more efficient in capturing task-specific information through 
higher-rank updates. Notably, the rank of $\Delta W$ for HiRA is also considerably higher than that for LoRA and MoRA,
though it does not reach the levels of SMoA.
This result highlights HiRA’s effectiveness in improving the expressiveness of the model 
while maintaining a more parameter-efficient approach. Both HiRA and SMoA demonstrate 
that higher-rank updates correlate with improved performance.
In summary, the results emphasize the importance of high-rank adaptation in PEFT methods, 
with SMoA leading in this aspect, achieving substantial improvements over baseline methods, 
and offering a promising direction for enhancing model performance.

\subsection{Effect of Rank on Performance}
Figure~\ref{fig:rank_pair} (right) demonstrates how the performance of SMoA changes with respect 
to the rank parameter $r$ across several commonsense reasoning tasks. As $r$ increases from 2 to 32, we observe a consistent and notable improvement in the model’s performance, with average accuracy rising from 83.49\% to 87.35\%. This trend highlights the impact of increasing the rank $r$ on the model's ability to generalize across diverse reasoning tasks. In particular, tasks such as SIQA, OBQA and WinoG exhibit the most significant performance gains 
as $r$ increases. SIQA, for example, improves by 5.26\% as $r$ rises, emphasizing the importance of a higher rank structure for tasks that demand more sophisticated reasoning capabilities. Notably, even when $r=8$, SMoA demonstrates a competitive edge, suggesting that it efficiently leverages a smaller number of trainable parameters while maintaining strong performance, making SMoA a compelling choice even under constrained resources.
\vspace{-.5cm}

\subsection{Analysis of Block Configurations}
In this section, we analyze how the SMoA block configuration affects performance across different tasks. We evaluate several choices of the pair $(r,K)$, where $r$ is the reference LoRA rank and each block uses local rank $r/K$, and report the results in Table~\ref{tab:rank_analysis}.

We make two observations. First, SMoA remains competitive across a wide range of block configurations. In Table~\ref{tab:rank_analysis}, several $(r,K)$ choices attain strong accuracy while sometimes using more than $2\times$ fewer trainable parameters than stronger-budget alternatives.
This suggests that, in our setting, performance depends not only on the nominal reference rank $r$, but also on how that rank budget is distributed across spectral blocks.
The configuration $(r,K)=(2,8)$ stands out as one favorable trade-off between accuracy and parameter efficiency: it reaches an average score of 85.87\% while keeping the trainable parameter count small. This highlights the importance of choosing a block configuration that balances structural coverage and budget.
Ideally, the pair $(r,K)$ should be tuned for each dataset, although $K=2$ already works well in most cases. Second, the optimal configuration varies across tasks. On ARC-e and HellaSwag, the best performance is usually obtained with $K=2$, whereas other tasks may benefit from larger values such as $K=4$ or $K=8$. We conjecture that this variation is mainly driven by model scale and task complexity, together with how well the reordered block structure matches the downstream residual.


\section{Conclusion}
In this paper, we propose SMoA, a parameter-efficient fine-tuning method that partitions a pretrained operator into multiple aligned spectral blocks and applies one Hadamard-modulated low-rank branch to each diagonal block.
We show that SMoA uses fewer trainable parameters than rank-$r$ LoRA while enlarging the accessible family of spectrum-aware updates through block-diagonal modulation, and we prove both a one-sided structural separation and a spectral approximation gap on a family of block-aligned modulated targets. This connects our method directly to recent LoRA rank theory: rather than only increasing nominal rank, SMoA changes the admissible update family so that some residual directions that would remain in the LoRA spectral tail can be represented exactly. Empirically, SMoA yields stronger average performance together with higher measured update ranks in the lower-budget setting considered here.


\bibliography{bibliography}
\bibliographystyle{plainnat}

\clearpage
\appendix

\section{Datasets}
\label{app:data}
We evaluate three categories of tasks: commonsense reasoning, dialogue generation, and mathematical reasoning. The commonsense reasoning benchmarks include:
\begin{itemize}[leftmargin=*]
    \item \textbf{BoolQ}~\cite{clark2019boolq} is a reading-comprehension dataset of naturally occurring yes/no questions paired with Wikipedia passages, designed to evaluate semantic understanding and reasoning over text.
    \item \textbf{PIQA} (Physical Interaction Question Answering)~\cite{Bisk2020} is a commonsense reasoning benchmark focused on physical knowledge, where the model must choose the more plausible solution to a goal-oriented question involving everyday object interactions and real-world physics.
    \item \textbf{SIQA} (Social Interaction Question Answering)~\cite{sap2019socialiqa} evaluates a model’s understanding of social interactions and human intentions by requiring it to select the most plausible explanation or outcome in everyday situations.
    \item \textbf{ARC-c} (AI2 Reasoning Challenge, Challenge Set)~\cite{clark2018think} is a multiple-choice science benchmark composed of grade-school questions that require nontrivial reasoning and external knowledge.
    \item \textbf{ARC-e} (AI2 Reasoning Challenge, Easy Set)~\cite{clark2018think} is the easier subset of ARC, containing questions that can usually be answered with simpler reasoning or direct factual knowledge.
    \item \textbf{OBQA}~\cite{mihaylov2018can} is a multiple-choice science benchmark in which each question requires combining a small set of ``open-book'' facts with additional commonsense reasoning.
    \item \textbf{HellaSwag}~\cite{zellers2019hellaswag} tests whether a model can predict plausible next events in everyday situations by selecting the most realistic continuation of a given context.
    \item \textbf{WinoG} (WinoGrande)~\cite{ai2:winogrande} is a large-scale commonsense reasoning dataset based on Winograd-style pronoun resolution and is designed to test whether a model can resolve ambiguous references using contextual and commonsense cues.
\end{itemize}
Dataset statistics for the commonsense reasoning benchmarks are summarized in Table~\ref{tab:data_sta}.

For dialogue generation, we use \textbf{ConvAI2}~\cite{dinan2019second}, a persona-based open-domain conversation dataset in which the model must generate or select responses that are coherent, engaging, and consistent with the given persona. For mathematical reasoning, we use \textbf{GSM8K}~\cite{cobbe2021gsm8k}. Following~\citep{huang2025hira}, we train on MetaMath and evaluate on GSM8K~\citep{Cobbe2021TrainingVT}.

\section{Baselines}
\label{app:baselines}
We compare \textsc{SMoA} with a representative set of PEFT methods.
\begin{itemize}[leftmargin=*]
    \item \textbf{Prompt Tuning}~\cite{lester2021power} adapts pretrained language models by learning a small set of trainable prompt embeddings while freezing the backbone parameters.
    \item \textbf{P-Tuning v2}~\cite{liu2021ptuning} injects trainable continuous prompts into every layer of a pretrained language model, yielding stronger parameter-efficient adaptation, especially at larger model scales.
    \item \textbf{LoRA}~\cite{hu2022lora} inserts low-rank trainable matrices into existing linear layers, achieving strong performance while updating only a small number of parameters.
    \item \textbf{DoRA}~\cite{liu2024dora} extends LoRA by decomposing the update into magnitude and direction components, allowing more expressive adaptation with a low parameter count.
    \item \textbf{MoRA}~\cite{jiang2024mora} replaces the standard LoRA structure with a higher-rank parameterization to improve expressive capacity under a constrained budget.
    \item \textbf{SSMLoRA}~\cite{yu2025ssmlora} augments LoRA with state-space-model-based structure to better capture long-range dependencies while keeping the number of trainable parameters small.
    \item \textbf{MeLoRA}~\cite{ren2024melora} uses multiple small low-rank adapters to reduce memory cost while improving adaptation performance and robustness.
    \item \textbf{HiRA}~\cite{huang2025hira} extends LoRA through hierarchical low-rank adaptation, enabling the model to capture both coarse- and fine-grained task-specific information.
\end{itemize}

\section{Evaluation Metric}
\label{app:evaluation_metric}
We evaluate each task family with its standard automatic metrics. For BoolQ, PIQA, SIQA, ARC-c, ARC-e, OBQA, HellaSwag, and WinoGrande, we report test accuracy. For each example, the model generates a free-form answer, which we map to the task label set using task-specific answer keywords (e.g., ``true'' or ``false'' for BoolQ, or the corresponding option strings for multiple-choice tasks). The first valid matched keyword is taken as the prediction; if no valid keyword is found, the prediction is counted as incorrect. This extraction protocol follows prior work and yields a unified evaluation pipeline across all eight commonsense reasoning benchmarks~\cite{hu2023llm,liu2024dora}. The ``Average'' score in Table~\ref{tab:commonsense} is the unweighted mean of the eight task accuracies.

For the conversational benchmark CONVAI2, we report BLEU~\cite{papineni2002bleu}, METEOR, ROUGE-L, and BERTScore~\cite{zhang2019bertscore}. In Table~\ref{tab:results2}, BERT-P, BERT-R, and BERT-F1 denote BERTScore precision, recall, and F1, respectively. When we refer to the average conversational score, we mean the unweighted mean over the reported automatic metrics in Table~\ref{tab:results2}.

For mathematical reasoning on GSM8K, we report final-answer accuracy, i.e., exact-match accuracy after extracting the numeric answer from the generated response using the standard answer format. A prediction is marked correct only when the extracted final answer exactly matches the ground-truth solution. Higher is better for all reported metrics.

\section{Implementation Details}
\label{app:implementation_details}
All experiments are conducted on H800-80GB GPUs.
We follow the training setup of~\cite{huang2025hira} except for learning-rate adjustments.
We implement SMoA on Llama-2-7B and Llama-3-8B using reference LoRA ranks $r=16$ and $r=32$. Unless otherwise stated, we use $K=2$ aligned spectral blocks, so each block uses local rank $\rho=r/K$.
For every adapted layer, the SVD of the frozen pretrained weight is computed once before training to determine the fixed spectrum-aware reordering and the corresponding block partition. This preprocessing step does not introduce additional trainable parameters and is not part of the inference path, although it does add a one-time offline cost before fine-tuning. For a layer $W_0 \in \mathbb{R}^{d_{\text{out}}\times d_{\text{in}}}$ with $q=\min(d_{\text{out}},d_{\text{in}})$, the preprocessing cost is the cost of one truncated/full SVD of that matrix, i.e., on the order of $O(d_{\text{out}}d_{\text{in}}q)$. During training, the total adapter parameter count becomes $K\!\left(\frac{r}{K}\frac{d_{\text{in}}}{K}+\frac{r}{K}\frac{d_{\text{out}}}{K}\right)=\frac{r}{K}(d_{\text{in}}+d_{\text{out}})$, which is a factor of $K$ smaller than a rank-$r$ LoRA layer. At inference time, the learned update can be mapped back to the original coordinates and merged into the base weight, so SMoA does not require a separate multi-branch execution path.
We use the AdamW optimizer~\cite{loshchilov2017decoupled} with a learning rate of 0.001 and a warmup of 1000 steps.
For BoolQ, PIQA, SIQA, ARC-c, ARC-e, OBQA, HellaSwag, and WinoGrande, we fine-tune for 5 epochs and evaluate every 100 steps, selecting the best validation checkpoint. For LoRA, DoRA, MoRA, HiRA, SSMLoRA, MeLoRA, and SMoA, we insert trainable updates into the query, key, value, and projection layers. Because SMoA uses fewer trainable parameters than rank-$r$ LoRA in the current setting, the comparison should be interpreted as a sub-budget comparison rather than a matched-parameter comparison. All main results use the same evaluation pipeline; because they are single-run results, small gaps should be interpreted cautiously. We report the average of 5 random seeds (7,42,123,1234,12345) for SMoA.
The hyperparameter configuration of SMoA is shown in Table~\ref{tab:smoa_hyperparameter} (\appref{app:hyperparameter}). The general hyperparameter configurations are shown in Table~\ref{tab:gene_hyparameters} (\appref{app:hyperparameter}).

\section{Additional Method Details}
\label{app:method-details}

\subsection{Spectrum-Aware Preprocessing Details}
\label{app:method-preprocess}

The spectrum-aware preprocessing step uses the frozen singular structure of $W_0$ only to define the permutations $\pi_{\text{out}}$ and $\pi_{\text{in}}$. These permutations reorder output and input coordinates so that coordinates assigned similar spectral roles become contiguous in the reordered matrix $\widetilde W_0 = P_{\text{out}} W_0 P_{\text{in}}^\top$. In the equal-size setting with $K$ blocks, the reordered row and column intervals $\mathcal{O}_k$ and $\mathcal{I}_k$ are the $k$-th contiguous intervals of lengths $d_{\text{out}}/K$ and $d_{\text{in}}/K$, respectively. The corresponding frozen block anchors are then defined by
\begin{equation}
M_k = \widetilde W_0[\mathcal{O}_k,\mathcal{I}_k]
\in \mathbb{R}^{\frac{d_{\text{out}}}{K}\times \frac{d_{\text{in}}}{K}},
\qquad
k=1,\dots,K.
\end{equation}
These anchors remain fixed during fine-tuning; only the low-rank factors attached to them are trained.

\section{More Results}
\label{sec:more_results}

\begin{figure*}[t]
    \centering

    \includegraphics[width=0.9\linewidth]{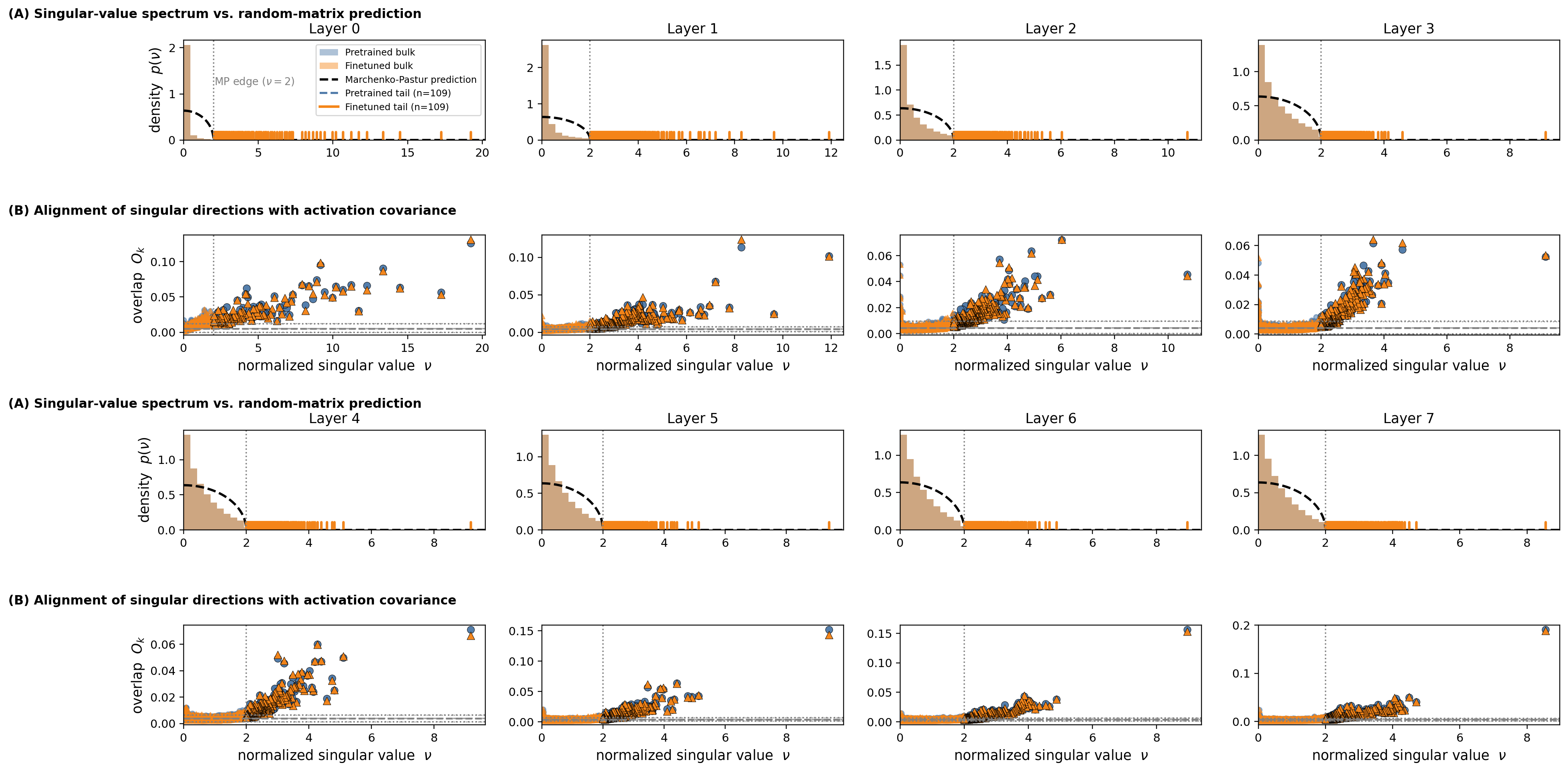}

    \includegraphics[width=0.9\linewidth]{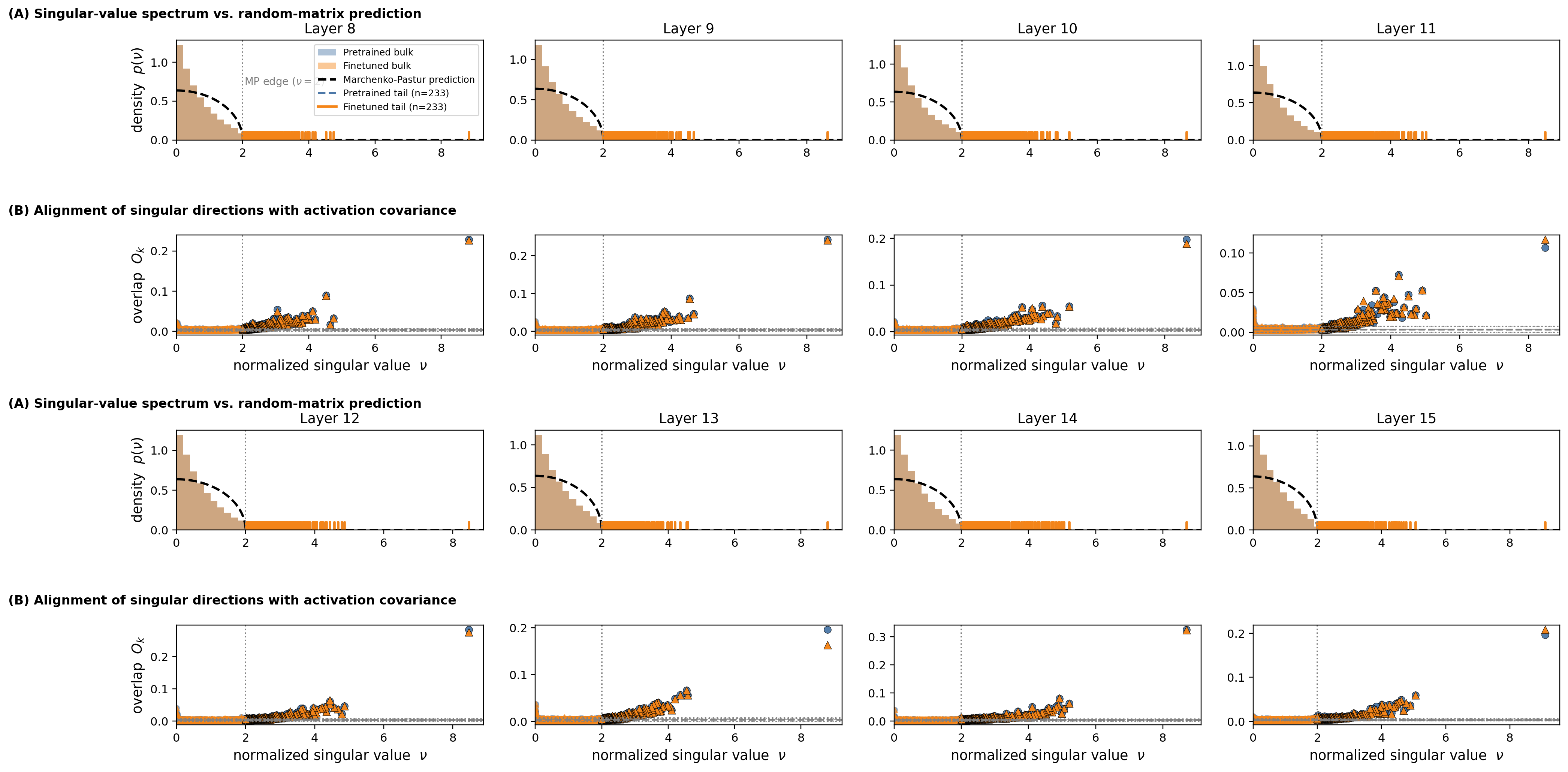}

    \caption{\textbf{Original Llama2-7B vs.\ finetuned spectra and overlaps for key weight matrix} (layer 0 to 15). Top: empirical distributions of normalized singular values $\nu$; light brown denotes the pretrained model and orange denotes the finetuned model. The dashed curve is the theoretical random-matrix bulk. Bottom: the corresponding maximum overlap $O_k$ between right singular vectors and activation-covariance eigenvectors, with circles for the pretrained model and triangles for the finetuned model. Gray lines indicate the bulk mean and $\pm 3\sigma$ intervals. The figure highlights how finetuning changes the right-tail spectrum and the alignment of tail singular directions with task-induced activation structure.}
    \label{fig:spectra_compare_1}
\end{figure*}

\begin{figure*}[t]
    \centering
    \includegraphics[width=0.9\linewidth]{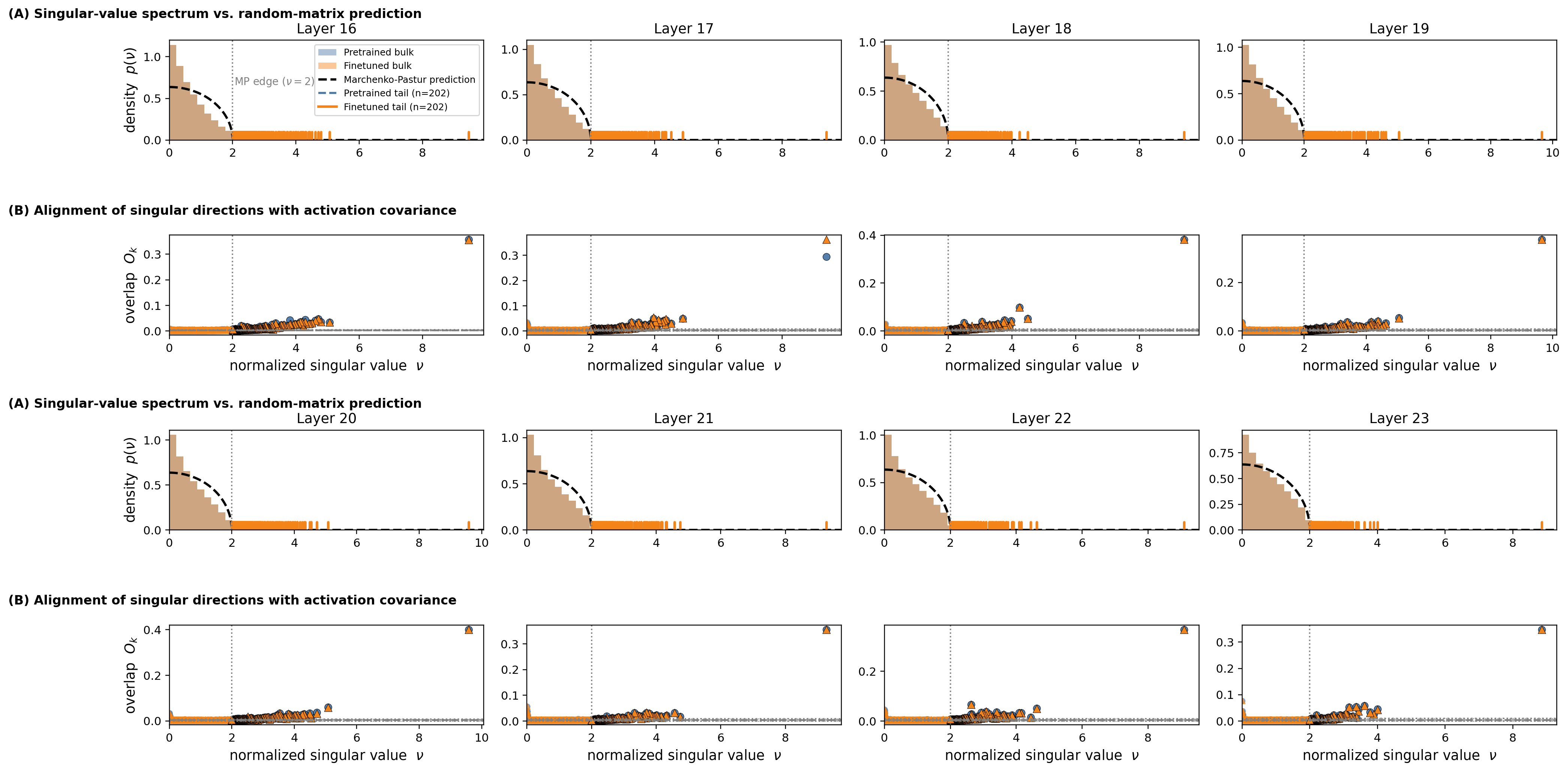}

    \includegraphics[width=0.9\linewidth]{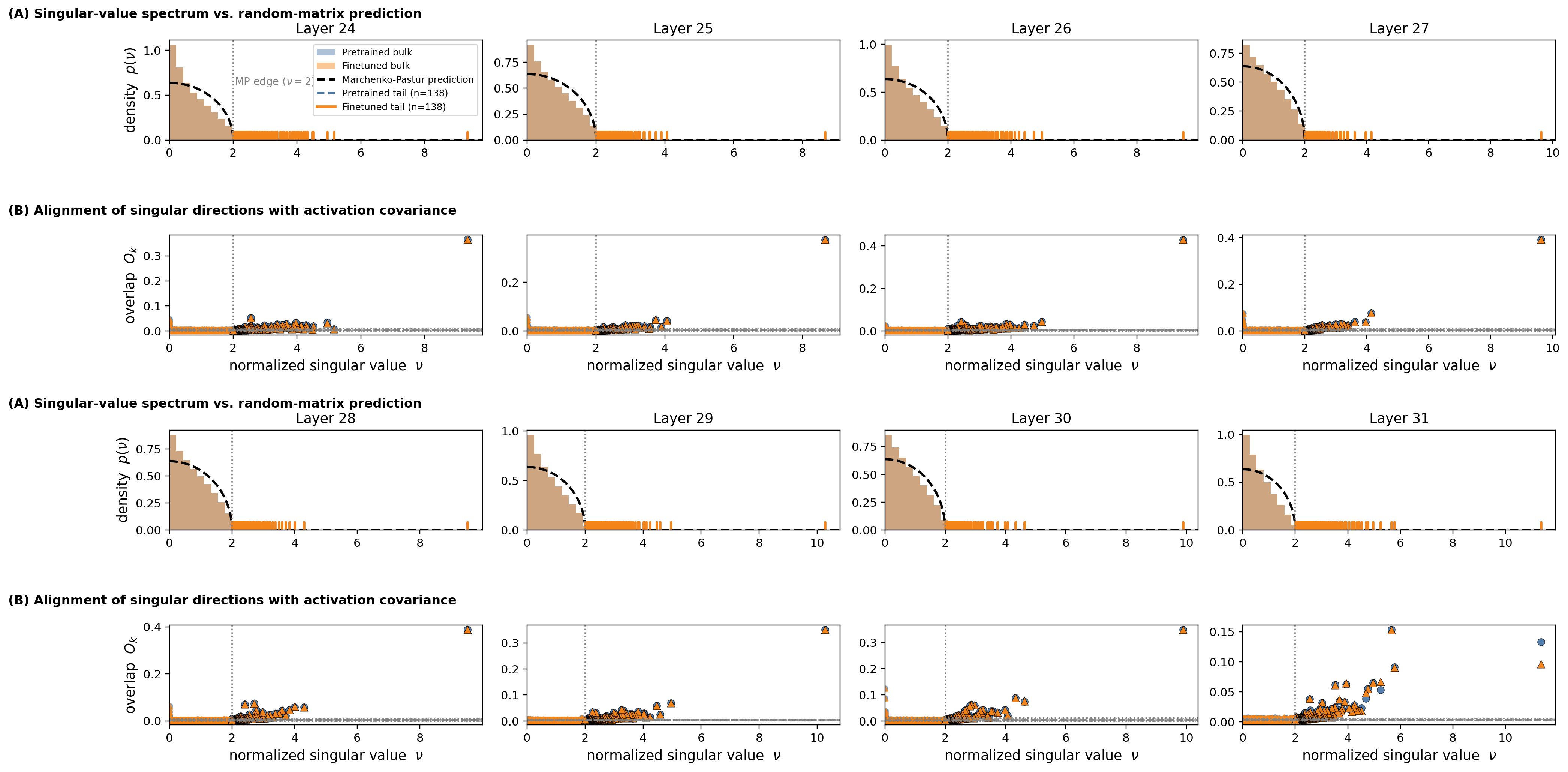}

    \caption{\textbf{Pretrained vs.\ finetuned spectra and overlaps on Llama2-7B key weight matrix} (layer 16 to 31).  Top: empirical distributions of normalized singular values $\nu$; light brown denotes the pretrained model and orange denotes the finetuned model. The dashed curve is the theoretical random-matrix bulk. Bottom: the corresponding maximum overlap $O_k$ between right singular vectors and activation-covariance eigenvectors, with circles for the pretrained model and triangles for the finetuned model. Gray lines indicate the bulk mean and $\pm 3\sigma$ intervals. The figure highlights how finetuning changes the right-tail spectrum and the alignment of tail singular directions with task-induced activation structure.}
    \label{fig:spectra_compare_2}
\end{figure*}

\subsection{Tail Spectrum Visualization}
\label{sub:tail_spectrum_visualization}

\paragraph{Layer-wise spectral changes before and after fine-tuning.}
Figure~\ref{fig:spectra_compare_1} and Figure~\ref{fig:spectra_compare_2} compare the layer-wise spectral distributions of the pretrained model and the finetuned SMoA model. Overall, we observe a consistent and highly structured spectral shift after fine-tuning. In the pretrained model, the singular spectrum is broadly distributed near the low-frequency/small-\(\nu\) region, shown by the light-brown density. After SMoA fine-tuning, the spectrum becomes more concentrated and shifts toward a higher-\(\nu\) band, shown by the orange density. This indicates that SMoA does not uniformly perturb all spectral directions; instead, it selectively amplifies a subset of singular directions that are weak or less dominant in the pretrained model.

At the earliest layers, i.e., Layer 0--3, the spectral change is relatively irregular compared with deeper layers. Layer 0 exhibits only a mild orange response, suggesting that the bottom layer is less affected by task-specific adaptation. From Layer 1 to Layer 3, the finetuned spectrum begins to form a visible bump around intermediate \(\nu\), while the pretrained spectrum still concentrates near the origin. The corresponding \(O_k\) values increase mainly in the intermediate-to-large \(\nu\) region, implying that the learned updates start to align with non-dominant spectral components rather than the principal pretrained directions.

For the shallow-to-middle layers, Layer 4--11, the spectral shift becomes much clearer. The orange density consistently forms a sharp peak around \(\nu \approx 2\), followed by a long but sparse tail. In contrast, the pretrained spectrum remains concentrated in the low-\(\nu\) bulk. This separation suggests that SMoA introduces task-specific energy into directions that are not strongly expressed in the pretrained weights. The overlap plots further support this observation: \(O_k\) is close to zero for most low-\(\nu\) components but grows noticeably in the intermediate and tail regions. In particular, Layers 6--11 show stronger isolated high-\(\nu\) responses, indicating that these layers contain directions highly sensitive to fine-tuning.

In the middle layers, Layer 12--19, the phenomenon becomes even more pronounced. The orange spectrum is highly concentrated around a narrow band, whereas the pretrained density decays smoothly from the origin. This implies that fine-tuning reorganizes the spectral mass into a more compact set of directions. The \(O_k\) curves show that the dominant changes are not located in the pretrained spectral bulk, but rather in the right-side tail. Several layers, such as Layers 14, 17, 18, and 19, exhibit large \(O_k\) outliers at high \(\nu\), suggesting that SMoA learns a small number of highly influential spectral directions. These directions are likely associated with task-discriminative features that are underrepresented in the pretrained model.

For the deeper layers, Layer 20--27, the orange density becomes sharper and more peaked, while the pretrained spectrum keeps a relatively smooth low-\(\nu\) distribution. This indicates that the effect of SMoA fine-tuning is increasingly selective in deeper transformer blocks. The overlap scores in these layers are mostly small in the bulk region but display clear spikes in the spectral tail. Such behavior is consistent with the intuition that later layers encode more task-specific abstractions: fine-tuning does not need to rewrite the entire representation space, but instead adjusts a sparse set of high-impact directions.

Finally, in the top layers, Layer 28--31, we observe the strongest spectral localization. The finetuned spectrum is sharply centered around a narrow \(\nu\) interval, and the pretrained spectrum remains separated from it. The corresponding \(O_k\) values show clear activation in the intermediate-to-large \(\nu\) region, especially in Layer 31, where several pronounced overlap responses appear. This suggests that the last layers are most directly responsible for adapting the pretrained representation to the downstream task. Importantly, the fine-tuning signal is not uniformly distributed across all singular components; rather, it is concentrated in a small number of tail or near-tail directions.

Taken together, these layer-wise observations provide empirical evidence that SMoA fine-tuning mainly modifies the tail and intermediate spectral components of the pretrained model, while leaving the dominant low-\(\nu\) bulk largely intact. This supports the view that parameter-efficient fine-tuning can achieve adaptation by exploiting underutilized spectral directions instead of overwriting the principal pretrained subspace. The consistent emergence of orange peaks across layers further suggests that SMoA learns a structured spectral reallocation, concentrating task-specific information into a compact set of singular directions.
\subsection{Spectral Structure}
\label{sub:spectral_structure}

Figure~\ref{fig:outlier_summary} shows the singular-value histograms of the attention key weight matrix of Llama-2-7B based on SMoA fine-tuning in BoolQ (Similar results can be observed in the Query and Value parameter matrices). Rather than exhibiting a sharply low-rank spectrum, the weights display a clear separation between a bulk component and a set of outlying singular values. The bulk can be interpreted as a high-dimensional noise-like background, while the outliers correspond to structured directions learned during pretraining. Importantly, these outliers are not limited to only the top few singular modes; many layers contain a long tail of moderately large singular values that remain separated from the bulk. This suggests that the key projection matrices encode information in a distributed high-rank subspace, rather than concentrating all useful structure in a small number of dominant directions. Such a spectral pattern provides evidence that fine-tuning may require modifying not only the leading singular components, but also weaker tail directions that capture more fine-grained or task-specific features.
\begin{figure*}[t]
    \centering
    \includegraphics[width=\textwidth]{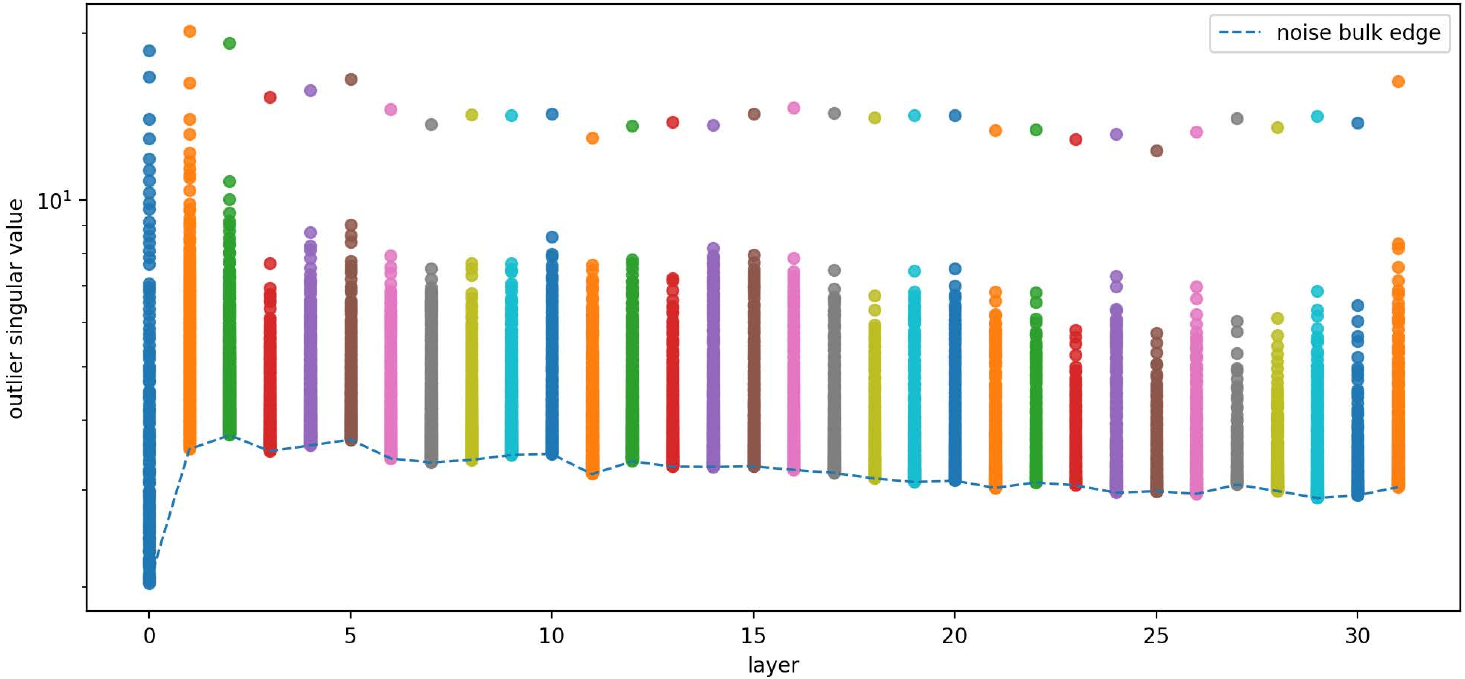}
    \caption{Attention key singular-value histograms of the Llama-2-7B based on SMoA fine-tuning in BoolQ across layers. The spectra exhibit a clear separation between the noise-like bulk and multiple outlier singular values. The persistent outliers, especially those in the spectral tail, indicate that the key projection weights contain rich high-rank structure.}
    \label{fig:outlier_summary}
\end{figure*}

\section{Additional Proofs}
\label{app:proofs}

\begin{lemma}[{Spectral Informativeness of Weights}]
\label{lem:empirical-near-full-rank}

Let $W_0\in\mathbb{R}^{d_{\mathrm{out}}\times d_{\mathrm{in}}}$ be a pretrained Transformer weight matrix with singular values
\begin{equation}
\sigma_1(W_0)\ge \sigma_2(W_0)\ge \cdots \ge \sigma_m(W_0)\ge 0,
\qquad m=\min(d_{\mathrm{out}},d_{\mathrm{in}}).
\end{equation}
Then prior empirical evidence suggests that $W$ exhibits a long-tailed spectrum and is near full-rank in a numerical sense~\cite{staats2024small}, in that
\begin{equation}
\operatorname{rank}_{\varepsilon}(W_0)
=
\#\{i:\sigma_i(W_0)>\varepsilon\}
\approx m,
\end{equation}
while for any $r<m$,
\begin{equation}
\sum_{i=r+1}^{m}\sigma_i^2(W_0)
\not\approx 0.
\end{equation}
\end{lemma}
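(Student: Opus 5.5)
Since Lemma~\ref{lem:empirical-near-full-rank} is an empirical regularity rather than a purely deductive claim, I plan to justify it by reducing both displayed properties to a stylized spike-plus-bulk model of $W_0$ and then checking that model against the measurements in Figure~\ref{fig:SpectraMatrices} and \cite{staats2024small}. Concretely, I will write $W_0 = S + N$, where $S$ has rank $k\ll m$ (the outliers beyond the Marchenko--Pastur edge) and $N$ is a bulk component whose empirical singular-value distribution is well approximated by the Marchenko--Pastur law with aspect ratio $\gamma = m/\max(d_{\mathrm{out}},d_{\mathrm{in}})$ and entry scale $s$. This is exactly the model whose fit is displayed by the dashed curves in Figure~\ref{fig:SpectraMatrices}.

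The first step is numerical full rank. Under the Marchenko--Pastur law, the squared singular values of $N$, normalized by $s^2 \max(d_{\mathrm{out}},d_{\mathrm{in}})$, fill the interval $[(1-\sqrt\gamma)^2,(1+\sqrt\gamma)^2]$.
\begin{itemize}
\item If $\gamma<1$, the lower edge is strictly positive. Hence for any $\varepsilon$ below $s\bigl(\sqrt{\max(d_{\mathrm{out}},d_{\mathrm{in}})}-\sqrt{m}\bigr)$, all but $o(m)$ bulk singular values exceed $\varepsilon$.
\item If $\gamma=1$ (square layers), the density near $0$ is the quarter-circle law. The fraction of singular values below $\varepsilon$ is therefore $O(\varepsilon/(s\sqrt m))$, which is still small for any $\varepsilon$ well below the bulk scale.
\end{itemize}
I then pass from $N$ to $W_0=S+N$ using Weyl's interlacing for a rank-$k$ perturbation, $\sigma_{i+k}(W_0)\le\sigma_i(N)$ and $\sigma_{i+k}(N)\le\sigma_i(W_0)$. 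This shifts the count by at most $k$, giving $\operatorname{rank}_\varepsilon(W_0)\ge m-k-o(m)\approx m$.

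The second step is the non-vanishing tail. For any $r<m$, the same interlacing gives
\[
\sum_{i=r+1}^m\sigma_i^2(W_0)\;\ge\;\sum_{i=r+k+1}^{m}\sigma_i^2(N).
\]
By the Marchenko--Pastur law, the right-hand side is at least $(m-r-k)$ times a positive quantile of the bulk when $\gamma<1$. When $\gamma=1$, it is a constant fraction of $\|N\|_F^2\approx s^2 d_{\mathrm{out}}d_{\mathrm{in}}$ whenever $r$ is not close to $m$. In either case the tail energy is bounded away from zero relative to $\|W_0\|_F^2$. I will also note that Figure~\ref{fig:SpectraMatrices} shows informative outliers beyond the top few modes. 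This strengthens the conclusion, since for the LoRA-relevant regime $r\ll m$ part of the structured signal itself lies in the tail.

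The main obstacle is that the bulk of a trained weight is not literally a random matrix, so the model assumption cannot be proved. I will handle it by stating the result conditionally on the bulk fit and then citing its empirical validation: the Marchenko--Pastur overlays in Figure~\ref{fig:SpectraMatrices} and the layerwise numerical-rank measurements of \cite{staats2024small}. This makes the ``$\approx$'' in the statement an explicit $o(m)$ and a constant-fraction bound under that hypothesis.
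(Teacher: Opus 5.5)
Your route is genuinely different from the paper's. The paper never models $W_0$. It takes $\operatorname{rank}_\varepsilon(W_0)\approx m$ directly from the measurements in \cite{staats2024small}, and invokes Eckart--Young--Mirsky only to identify $\sum_{i>r}\sigma_i^2(W_0)$ with the optimal rank-$r$ truncation error $\|W_0-W_0^{(r)}\|_F^2$. It then argues that this error is non-negligible \emph{functionally}, from the cited perplexity increase when small singular values are removed.

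You instead make the hypothesis explicit (rank-$k$ spikes plus a Marchenko--Pastur bulk). You derive both displays from the MP edges, the quarter-circle density at $0$, and Weyl interlacing. These steps are correct, and they give explicit bounds ($\operatorname{rank}_\varepsilon(W_0)\ge m-k-o(m)$, and a tail lower bound with visible dependence on $\gamma$, $r$, $k$) where the paper offers none.

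Your bounds also expose a limit the statement glosses over. For square layers ($\gamma=1$, e.g.\ the Llama-2-7B attention projections), the bottom $j$ squared singular values carry only an $O\bigl((j/m)^3\bigr)$ fraction of $\|N\|_F^2$. So in your own model the tail energy is negligible for $r=m-o(m)$, and what you can deliver is $r$ sufficiently far below $m-k$, not ``any $r<m$''.

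Two smaller points:
\begin{enumerate}
\item Interlacing charges you $k$ in both steps, so you need $k\ll m$. That sits uneasily with the paper's description of ``a substantial fraction of singular values'' beyond the bulk edge. Reading the count directly off the fitted spectrum of $W_0$, whose outliers all exceed $\varepsilon$, avoids that loss.
\item ``Bounded away from zero relative to $\|W_0\|_F^2$'' additionally needs $\|S\|_F=O(\|N\|_F)$.
\end{enumerate}

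Conversely, the paper's route gives two things yours lacks. First, the Eckart--Young--Mirsky identification ties the tail sum to the error of the best rank-$r$ approximation, which is how the lemma is used to motivate SMoA against rank-$r$ LoRA. Second, its non-negligibility claim concerns information rather than energy. In your model every tail direction past the first $k$ is pure noise, so you show the tail carries energy, not that it is informative. Indeed, the evidence the paper cites (small singular directions matter for perplexity) is evidence against treating that end of the spectrum as Marchenko--Pastur noise. Your hypothesis is therefore least secure exactly where your tail bound draws its mass.
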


\subsection{Proof of Lemma~\ref{lem:empirical-near-full-rank}}
\label{app:proof-lemma1}
\begin{proof}
Write the singular value decomposition of $W$ as
\begin{equation}
W_0=U\Sigma V^\top,
\qquad
\Sigma=\operatorname{diag}(\sigma_1,\dots,\sigma_m).
\end{equation}

By the empirical observations of \citet{staats2024small}, the singular-value spectra of pretrained Transformer weight matrices exhibit a long-tailed profile rather than a sharp low-rank collapse. Equivalently, there is no rapid transition of the form
\begin{equation}
\sigma_1\ge \cdots \ge \sigma_r \gg \sigma_{r+1}\approx\cdots\approx \sigma_m\approx 0
\end{equation}
for some small $r\ll m$; instead, a substantial portion of the spectrum remains above numerical noise level. Hence
\begin{equation}
\operatorname{rank}_{\varepsilon}(W_0)
=
\#\{i:\sigma_i(W_0)>\varepsilon\}
\approx m,
\end{equation}
which gives the claimed near-full-rank statement in the numerical sense.

Next, for any $r<m$, the Eckart--Young--Mirsky theorem gives
\begin{equation}
\|W_0-W_0^{(r)}\|_F^2
=
\sum_{i=r+1}^{m}\sigma_i^2(W_0)
=
E_{\mathrm{tail}}(r).
\end{equation}
The same empirical study further shows that removing small singular values can increase perplexity, especially after alignment or downstream fine-tuning. Therefore, the lower-tail singular directions cannot be identified with pure noise in general, which implies that $E_{\mathrm{tail}}(r)$ should not be assumed negligible a priori.

Combining the long-tailed spectrum,
\begin{equation}
\operatorname{rank}_{\varepsilon}(W_0)\approx m,
\end{equation}
with the non-negligible effect of truncating
\begin{equation}
\sum_{i=r+1}^{m}\sigma_i^2(W_0),
\end{equation}
we conclude that pretrained Transformer weight matrices are more faithfully described as long-tailed and numerically near full-rank.

\end{proof}

\subsection{Proof of Proposition~\ref{prop:smoa-rank-ceiling}}
\label{app:proof-rank-ceiling}

\begin{proof}
For each block, the local update is
\begin{equation}
\Delta M_k=(B_kA_k)\odot M_k.
\end{equation}
By the standard rank inequality for Hadamard products,
\begin{equation}
\operatorname{rank}(\Delta M_k)
=
\operatorname{rank}\!\bigl((B_kA_k)\odot M_k\bigr)
\le
\operatorname{rank}(B_kA_k)\,\operatorname{rank}(M_k).
\end{equation}
Since
\begin{equation}
B_kA_k
\in
\mathbb{R}^{\frac{d_{\text{out}}}{K}\times \frac{d_{\text{in}}}{K}}
\end{equation}
is formed from factors of inner dimension $\rho$, we have
\begin{equation}
\operatorname{rank}(B_kA_k)\le \rho.
\end{equation}
Therefore,
\begin{equation}
\operatorname{rank}(\Delta M_k)
\le
\rho\,\operatorname{rank}(M_k).
\end{equation}
On the other hand, since
\begin{equation}
\Delta M_k\in\mathbb{R}^{\frac{d_{\text{out}}}{K}\times\frac{d_{\text{in}}}{K}},
\end{equation}
its rank is also bounded by the block size:
\begin{equation}
\operatorname{rank}(\Delta M_k)\le s_k,
\qquad
s_k=\min\!\left(\frac{d_{\text{out}}}{K},\frac{d_{\text{in}}}{K}\right).
\end{equation}
Combining the two bounds yields
\begin{equation}
\operatorname{rank}(\Delta M_k)
\le
\min\!\left(
s_k,\;
\rho\,\operatorname{rank}(M_k)
\right).
\end{equation}

Now, by construction,
\begin{equation}
\Delta W=P_{\text{out}}^\top\,\widetilde{\Delta W}\,P_{\text{in}},
\qquad
\widetilde{\Delta W}=\operatorname{blkdiag}(\Delta M_1,\dots,\Delta M_K).
\end{equation}
Since permutation matrices preserve rank and rank is additive over block-diagonal matrices,
\begin{equation}
\operatorname{rank}(\Delta W)
=
\operatorname{rank}(\widetilde{\Delta W})
=
\sum_{k=1}^{K}\operatorname{rank}(\Delta M_k)
\le
\sum_{k=1}^{K}
\min\!\left(
s_k,\;
\rho\,\operatorname{rank}(M_k)
\right).
\end{equation}
Define
\begin{equation}
U
:=
\sum_{k=1}^{K}
\min\!\left(
s_k,\;
\rho\,\operatorname{rank}(M_k)
\right).
\end{equation}
Then clearly
\begin{equation}
U\ge 0.
\end{equation}
Moreover, using $\min(a,b)\le a$ for each block,
\begin{equation}
U
\le
\sum_{k=1}^{K}s_k.
\end{equation}
Under the equal-size partition,
\begin{equation}
\sum_{k=1}^{K}s_k
=
K\,\min\!\left(\frac{d_{\text{out}}}{K},\frac{d_{\text{in}}}{K}\right)
=
\min(d_{\text{out}},d_{\text{in}}).
\end{equation}
Hence
\begin{equation}
\operatorname{rank}(\Delta W)
\le
U
\le
\min(d_{\text{out}},d_{\text{in}}).
\end{equation}

For standard LoRA, the update takes the form
\begin{equation}
\Delta W_{\mathrm{LoRA}} = BA,
\qquad
B\in\mathbb{R}^{d_{\text{out}}\times r},
\quad
A\in\mathbb{R}^{r\times d_{\text{in}}},
\end{equation}
and therefore
\begin{equation}
\operatorname{rank}(\Delta W_{\mathrm{LoRA}})
=
\operatorname{rank}(BA)
\le r.
\end{equation}
Thus, whenever
\begin{equation}
U>r,
\end{equation}
the analytic rank ceiling of SMoA is strictly larger than that of rank-$r$ LoRA.
\end{proof}

\subsection{Proof of Corollary~\ref{cor:near-full-rank-ceiling}}
\label{app:proof-full-rank-cor}

\begin{corollary}[{Rank-Ceiling Separation under Full-Rank Local Anchors}]
\label{cor:near-full-rank-ceiling}
Assume the equal-size partition setting and define
\begin{equation}
s=\min\!\left(\frac{d_{\text{out}}}{K},\frac{d_{\text{in}}}{K}\right).
\end{equation}
If each reordered anchor block is full-rank within its local dimensions, i.e.,
\begin{equation}
\operatorname{rank}(M_k)=s,
\qquad
k=1,\dots,K,
\end{equation}
then the SMoA analytic rank upper bound in Equation~\eqref{eq:merged-rank-ceiling} reduces to
\begin{equation}
U_{\mathrm{nf}}
:=
\sum_{k=1}^{K}\min\!\left(s,\frac{r}{K}\operatorname{rank}(M_k)\right)
=
K\min\!\left(s,\frac{r}{K}s\right).
\end{equation}
Moreover, if
\begin{equation}
K<\min(d_{\text{out}},d_{\text{in}})
\qquad\text{and}\qquad
r<\min(d_{\text{out}},d_{\text{in}}),
\end{equation}
then
\begin{equation}
U_{\mathrm{nf}}>r.
\end{equation}
Therefore, under full-rank local anchors, SMoA admits a strictly larger analytic rank ceiling than rank-$r$ LoRA.
\end{corollary}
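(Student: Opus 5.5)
The plan is to substitute the full-rank hypothesis into the bound $U$ from Proposition~\ref{prop:smoa-rank-ceiling}, simplify the per-block minimum, and then compare the result with $r$ using the two size conditions.

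\textbf{Reduction.} Under the equal-size partition every block has the same $s_k=s$. Substituting $\operatorname{rank}(M_k)=s$ into Equation~\eqref{eq:merged-rank-ceiling} makes each summand equal to $\min(s,\frac{r}{K}s)$. The summand no longer depends on $k$, so the sum is $K\min(s,\frac{r}{K}s)=U_{\mathrm{nf}}$. This gives the first claim directly.

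\textbf{Separation $U_{\mathrm{nf}}>r$.} I write $U_{\mathrm{nf}}=Ks\min(1,\rho)$ with $\rho=r/K$ and split into two cases according to $\rho$.

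\emph{Case $\rho\ge 1$.} This is the intended case, where the local rank $\rho$ is a positive integer. Here $U_{\mathrm{nf}}=Ks$. Since $K$ divides both dimensions, $Ks=\min(d_{\text{out}},d_{\text{in}})$, and the hypothesis $r<\min(d_{\text{out}},d_{\text{in}})$ gives $U_{\mathrm{nf}}>r$ immediately.

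\emph{Case $\rho<1$.} This case only matters if one allows a fractional nominal local rank. Here $U_{\mathrm{nf}}=K\rho s=rs$. The hypothesis $K<\min(d_{\text{out}},d_{\text{in}})=Ks$ forces $s\ge 2$. Hence $rs>r$ whenever $r>0$, and $r>0$ is implicit because LoRA has a positive rank budget.

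\textbf{Conclusion and main obstacle.} In both cases $U_{\mathrm{nf}}>r$. By the LoRA bound $\operatorname{rank}(\Delta W_{\mathrm{LoRA}})\le r$ recorded in Proposition~\ref{prop:smoa-rank-ceiling}, the SMoA ceiling strictly exceeds the LoRA ceiling. The only delicate point is which case the paper intends. I expect $\rho$ to be a positive integer, which makes the condition $K<\min(d_{\text{out}},d_{\text{in}})$ essentially redundant. I will nonetheless include the $\rho<1$ case so that both stated hypotheses are actually used and the argument stays valid.
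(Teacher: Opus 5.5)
Your proposal is correct and matches the paper's proof: substitute $\operatorname{rank}(M_k)=s$, then split on $r<K$ versus $r\ge K$. In the first case you use $s>1$ (from $K<\min(d_{\text{out}},d_{\text{in}})$) to get $rs>r$, and in the second $Ks=\min(d_{\text{out}},d_{\text{in}})>r$. Your explicit remark that $r>0$ is needed for $rs>r$ is a small point the paper leaves implicit.
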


\begin{proof}
Under the assumption
\begin{equation}
\operatorname{rank}(M_k)=s,
\qquad
k=1,\dots,K,
\end{equation}
Equation~\eqref{eq:merged-rank-ceiling} becomes
\begin{equation}
U_{\mathrm{nf}}
=
\sum_{k=1}^{K}\min\!\left(s,\frac{r}{K}s\right)
=
K\min\!\left(s,\frac{r}{K}s\right).
\end{equation}

We consider two cases.

\medskip
\noindent
\textbf{Case 1:} $r<K$. Then $\frac{r}{K}<1$, so
\begin{equation}
U_{\mathrm{nf}}
=
K\cdot \frac{r}{K}s
=
rs.
\end{equation}
Since
\begin{equation}
K<\min(d_{\text{out}},d_{\text{in}}),
\end{equation}
we have
\begin{equation}
s=\min\!\left(\frac{d_{\text{out}}}{K},\frac{d_{\text{in}}}{K}\right)>1,
\end{equation}
which implies
\begin{equation}
U_{\mathrm{nf}}=rs>r.
\end{equation}

\medskip
\noindent
\textbf{Case 2:} $r\ge K$. Then $\frac{r}{K}\ge 1$, hence
\begin{equation}
U_{\mathrm{nf}}
=
Ks
=
\min(d_{\text{out}},d_{\text{in}}).
\end{equation}
Using the assumption
\begin{equation}
r<\min(d_{\text{out}},d_{\text{in}}),
\end{equation}
we obtain
\begin{equation}
U_{\mathrm{nf}}>r.
\end{equation}

Therefore, in both cases,
\begin{equation}
U_{\mathrm{nf}}>r.
\end{equation}
Since the LoRA analytic rank ceiling is $r$, SMoA admits a strictly larger analytic rank upper bound under the stated full-rank block condition.
\end{proof}

\subsection{Proof of Proposition~\ref{prop:expressivity-separation}}
\label{app:proof-expressivity-separation}

\begin{proposition}[{Expressive Family Separation}]
\label{prop:expressivity-separation}

For a reference rank budget $r$, define the standard LoRA family
\begin{equation}
\mathcal{F}_{\mathrm{LoRA}}(r)
=
\left\{
BA:
A\in\mathbb{R}^{r\times d_{\text{in}}},
\;
B\in\mathbb{R}^{d_{\text{out}}\times r}
\right\}.
\end{equation}
Define also the SMoA family induced by the fixed preprocessing
\(
(P_{\text{out}},P_{\text{in}},\{M_k\}_{k=1}^{K})
\)
as
\begin{equation}
\begin{aligned}
\mathcal{F}_{\mathrm{SMoA}}(W_0;r,K)
=
\Bigl\{
&P_{\text{out}}^\top
\operatorname{blkdiag}\bigl((B_1A_1)\odot M_1,\dots,(B_KA_K)\odot M_K\bigr)
P_{\text{in}}
:\\
& A_k\in\mathbb{R}^{\rho\times d_{\text{in}}/K},\;
B_k\in\mathbb{R}^{d_{\text{out}}/K\times \rho},
\;
k=1,\dots,K
\Bigr\},
\end{aligned}
\end{equation}
where $\rho=r/K$. Now let
\begin{equation}
C_k\in\mathbb{R}^{\frac{d_{\text{out}}}{K}\times\frac{d_{\text{in}}}{K}},
\qquad
\operatorname{rank}(C_k)\le \rho,
\qquad
k=1,\dots,K,
\end{equation}
and define the block-aligned target update in reordered coordinates by
\begin{equation}
\label{eq:block-aligned-target}
\widetilde{\Delta W}^{\star}
=
\operatorname{blkdiag}(C_1\odot M_1,\dots,C_K\odot M_K).
\end{equation}
Then:

\begin{enumerate}
\item
\begin{equation}
P_{\text{out}}^\top \widetilde{\Delta W}^{\star} P_{\text{in}}
\in
\mathcal{F}_{\mathrm{SMoA}}(W_0;r,K).
\end{equation}

\item
If
\begin{equation}
\operatorname{rank}(\widetilde{\Delta W}^{\star})>r,
\end{equation}
then
\begin{equation}
P_{\text{out}}^\top \widetilde{\Delta W}^{\star} P_{\text{in}}
\notin
\mathcal{F}_{\mathrm{LoRA}}(r).
\end{equation}
\end{enumerate}

Therefore, any block-aligned anchor-modulated target whose rank exceeds $r$ serves as a witness that
\begin{equation}
\mathcal{F}_{\mathrm{SMoA}}(W_0;r,K)
\setminus
\mathcal{F}_{\mathrm{LoRA}}(r)
\neq \varnothing.
\end{equation}

\end{proposition}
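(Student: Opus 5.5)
The plan splits into the two claims of the statement, and both follow by direct verification from the definitions.

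\textbf{Item 1 (membership in the SMoA family).} For each $k$, since $\operatorname{rank}(C_k)\le\rho$, a thin SVD $C_k=U_k\Sigma_kV_k^\top$ yields factors with inner dimension at most $\rho$. I will pad these with zero columns and rows to get exact inner dimension $\rho$:
\[
B_k=[\,U_k\Sigma_k^{1/2}\;\;0\,]\in\mathbb{R}^{\frac{d_{\text{out}}}{K}\times\rho},
\qquad
A_k=\begin{bmatrix}\Sigma_k^{1/2}V_k^\top\\ 0\end{bmatrix}\in\mathbb{R}^{\rho\times\frac{d_{\text{in}}}{K}},
\]
so that $B_kA_k=C_k$. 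Substituting into the defining expression of $\mathcal{F}_{\mathrm{SMoA}}(W_0;r,K)$ gives, blockwise, $(B_kA_k)\odot M_k=C_k\odot M_k$. The block-diagonal assembly then equals $\widetilde{\Delta W}^{\star}$, and conjugating by the fixed permutations gives $P_{\text{out}}^\top\widetilde{\Delta W}^{\star}P_{\text{in}}$. One small point needs checking: $\rho=r/K$ must be an integer. This is implicitly assumed by the parameterization, and I will state it explicitly.

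\textbf{Item 2 (exclusion from the LoRA family).} I will argue by contrapositive. Suppose $P_{\text{out}}^\top\widetilde{\Delta W}^{\star}P_{\text{in}}=BA$ with $B\in\mathbb{R}^{d_{\text{out}}\times r}$ and $A\in\mathbb{R}^{r\times d_{\text{in}}}$. Then its rank is at most $r$, exactly as in the LoRA bound used in the proof of Proposition~\ref{prop:smoa-rank-ceiling}. Since $P_{\text{out}}$ and $P_{\text{in}}$ are permutation matrices, hence orthogonal and invertible, we have
\[
\operatorname{rank}(P_{\text{out}}^\top\widetilde{\Delta W}^{\star}P_{\text{in}})=\operatorname{rank}(\widetilde{\Delta W}^{\star}),
\]
so $\operatorname{rank}(\widetilde{\Delta W}^{\star})\le r$. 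This contradicts the hypothesis $\operatorname{rank}(\widetilde{\Delta W}^{\star})>r$.

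\textbf{Final claim.} Combining the two items, any witness with $\operatorname{rank}(\widetilde{\Delta W}^{\star})>r$ lies in $\mathcal{F}_{\mathrm{SMoA}}\setminus\mathcal{F}_{\mathrm{LoRA}}(r)$, so this set is nonempty.

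There is essentially no hard step in the argument itself. The only subtlety is that the separation is conditional: it requires that some choice of $C_k$ actually attains rank above $r$. To address this I will add a remark rather than prove a new claim. By Proposition~\ref{prop:smoa-rank-ceiling} the block ranks add, so existence reduces to finding $C_k$ with $\sum_k\operatorname{rank}(C_k\odot M_k)>r$. For instance, take $\rho=1$ and $C_k=\mathbf{1}\mathbf{1}^\top$, which gives $C_k\odot M_k=M_k$. Then the condition becomes $\sum_k\operatorname{rank}(M_k)>r$, which holds under the full-rank anchors of Corollary~\ref{cor:near-full-rank-ceiling}.
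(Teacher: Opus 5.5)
Your proposal is correct and follows the paper's proof step for step. For item 1 you factor each $C_k=B_kA_k$ with inner dimension $\rho$ and substitute into the SMoA parameterization; for item 2 you combine $\operatorname{rank}(BA)\le r$ with the permutation invariance of rank. The zero-padding for $\operatorname{rank}(C_k)<\rho$ and the integrality caveat on $\rho=r/K$ are details the paper leaves implicit. Your existence remark also does not need $\rho=1$: since $\operatorname{rank}(\mathbf{1}\mathbf{1}^\top)=1\le\rho$, full-rank anchors give $\operatorname{rank}(\widetilde{\Delta W}^{\star})=\min(d_{\text{out}},d_{\text{in}})>r$ for any $\rho\ge 1$.
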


\begin{proof}

Since
\begin{equation}
\operatorname{rank}(C_k)\le \rho,
\end{equation}
each matrix $C_k$ admits a rank-$\rho$ factorization
\begin{equation}
C_k=B_kA_k,
\qquad
A_k\in\mathbb{R}^{\rho\times d_{\text{in}}/K},
\quad
B_k\in\mathbb{R}^{d_{\text{out}}/K\times \rho}.
\end{equation}
Substituting these factorizations into the SMoA construction gives
\begin{align}
&P_{\text{out}}^\top
\operatorname{blkdiag}\bigl((B_1A_1)\odot M_1,\dots,(B_KA_K)\odot M_K\bigr)
P_{\text{in}} \nonumber\\
&\qquad=
P_{\text{out}}^\top
\operatorname{blkdiag}(C_1\odot M_1,\dots,C_K\odot M_K)
P_{\text{in}}
=
P_{\text{out}}^\top \widetilde{\Delta W}^{\star} P_{\text{in}}.
\end{align}
Hence
\begin{equation}
P_{\text{out}}^\top \widetilde{\Delta W}^{\star} P_{\text{in}}
\in
\mathcal{F}_{\mathrm{SMoA}}(W_0;r,K).
\end{equation}

Next, every matrix in $\mathcal{F}_{\mathrm{LoRA}}(r)$ has rank at most $r$, because for any
\begin{equation}
\Delta W_{\mathrm{LoRA}}=BA
\quad\text{with}\quad
A\in\mathbb{R}^{r\times d_{\text{in}}},
\;
B\in\mathbb{R}^{d_{\text{out}}\times r},
\end{equation}
we have
\begin{equation}
\operatorname{rank}(\Delta W_{\mathrm{LoRA}})
=
\operatorname{rank}(BA)
\le r.
\end{equation}
Moreover, permutation matrices preserve rank, so
\begin{equation}
\operatorname{rank}\!\left(P_{\text{out}}^\top \widetilde{\Delta W}^{\star} P_{\text{in}}\right)
=
\operatorname{rank}(\widetilde{\Delta W}^{\star}).
\end{equation}
Therefore, if
\begin{equation}
\operatorname{rank}(\widetilde{\Delta W}^{\star})>r,
\end{equation}
then
\begin{equation}
\operatorname{rank}\!\left(P_{\text{out}}^\top \widetilde{\Delta W}^{\star} P_{\text{in}}\right)>r,
\end{equation}
which rules out membership in $\mathcal{F}_{\mathrm{LoRA}}(r)$. Thus
\begin{equation}
P_{\text{out}}^\top \widetilde{\Delta W}^{\star} P_{\text{in}}
\notin
\mathcal{F}_{\mathrm{LoRA}}(r).
\end{equation}
This proves the claim.

\end{proof}

\subsection{Proof of Corollary~\ref{cor:spectral-gap}}
\label{app:proof-spectral-gap}

\begin{corollary}[{Spectral Approximation Gap}]
\label{cor:spectral-gap}

Let
\begin{equation}
\widetilde{\Delta W}^{\star}
=
\operatorname{blkdiag}(C_1\odot M_1,\dots,C_K\odot M_K)
\end{equation}
be any block-aligned target from Proposition~\ref{prop:expressivity-separation}, and let
\begin{equation}
\sigma_1(\widetilde{\Delta W}^{\star})
\ge
\sigma_2(\widetilde{\Delta W}^{\star})
\ge \cdots
\end{equation}
denote its singular values. If
\begin{equation}
\operatorname{rank}(\widetilde{\Delta W}^{\star})>r,
\end{equation}
then
\begin{equation}
\inf_{\Delta\in\mathcal{F}_{\mathrm{LoRA}}(r)}
\left\|
\Delta - P_{\text{out}}^\top\widetilde{\Delta W}^{\star}P_{\text{in}}
\right\|_F^2
=
\sum_{j>r}\sigma_j(\widetilde{\Delta W}^{\star})^2
>
0,
\end{equation}
whereas
\begin{equation}
\inf_{\Delta\in\mathcal{F}_{\mathrm{SMoA}}(W_0;r,K)}
\left\|
\Delta - P_{\text{out}}^\top\widetilde{\Delta W}^{\star}P_{\text{in}}
\right\|_F^2
=
0.
\end{equation}
Therefore, rank-$r$ LoRA incurs an irreducible spectral-tail approximation error on this witness family, while SMoA realizes the target exactly.

\end{corollary}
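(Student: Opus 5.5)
The plan is to treat the two infima separately. The LoRA part reduces to the Eckart--Young--Mirsky theorem after removing the permutations. The SMoA part follows directly from item~1 of Proposition~\ref{prop:expressivity-separation}.

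\textbf{LoRA infimum.} Write $T=P_{\text{out}}^\top\widetilde{\Delta W}^{\star}P_{\text{in}}$. First I will check that $\mathcal{F}_{\mathrm{LoRA}}(r)$ coincides exactly with the set of $d_{\text{out}}\times d_{\text{in}}$ matrices of rank at most $r$. One inclusion is $\operatorname{rank}(BA)\le r$. For the other, any matrix of rank $q\le r$ has a thin factorization with inner dimension $q$, which can be padded with zero columns of $B$ and zero rows of $A$ up to inner dimension $r$.

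Next I will use that permutation matrices are orthogonal. Hence $\|\Delta-T\|_F=\|P_{\text{out}}\Delta P_{\text{in}}^\top-\widetilde{\Delta W}^{\star}\|_F$. The map $\Delta\mapsto P_{\text{out}}\Delta P_{\text{in}}^\top$ is a bijection of the rank-$\le r$ set onto itself, and it preserves singular values. So the infimum equals
\begin{equation*}
\min_{\operatorname{rank}(X)\le r}\|X-\widetilde{\Delta W}^{\star}\|_F^2 .
\end{equation*}
Eckart--Young--Mirsky, as already invoked in the proof of Lemma~\ref{lem:empirical-near-full-rank}, then gives the value $\sum_{j>r}\sigma_j(\widetilde{\Delta W}^{\star})^2$. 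This minimum is attained by the truncated SVD.

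Strict positivity follows because $\operatorname{rank}(\widetilde{\Delta W}^{\star})>r$ means $\sigma_{r+1}(\widetilde{\Delta W}^{\star})>0$. That single term already makes the tail sum positive.

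\textbf{SMoA infimum.} Item~1 of Proposition~\ref{prop:expressivity-separation} shows $T\in\mathcal{F}_{\mathrm{SMoA}}(W_0;r,K)$. To get it, factor each $C_k=B_kA_k$ with inner dimension $\rho$, padding with zeros if $\operatorname{rank}(C_k)<\rho$. Taking $\Delta=T$ gives a zero objective, and since the objective is nonnegative the infimum is $0$ and is attained.

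\textbf{Main obstacle.} The argument is otherwise routine, so the only step needing care is the reduction to Eckart--Young. Specifically, I must confirm that the LoRA family equals the full rank-$\le r$ set, which is the zero-padding argument, and that it is invariant under conjugation by the fixed permutations. With these two facts the infimum is a minimum over a closed set, and the exact tail formula follows with no optimization considerations.
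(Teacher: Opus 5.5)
Your proposal is correct and follows the paper's proof essentially step for step. SMoA error is zero because the target lies in $\mathcal{F}_{\mathrm{SMoA}}(W_0;r,K)$ by Proposition~\ref{prop:expressivity-separation}; for LoRA, you identify $\mathcal{F}_{\mathrm{LoRA}}(r)$ with the rank-$\le r$ set, remove the permutations by orthogonal invariance, and apply Eckart--Young--Mirsky, with positivity from $\sigma_{r+1}(\widetilde{\Delta W}^{\star})>0$. Your zero-padding arguments only spell out two points the paper asserts without proof: that $\mathcal{F}_{\mathrm{LoRA}}(r)$ is the full rank-$\le r$ set, and that each $C_k$ factors with inner dimension exactly $\rho$.
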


\begin{proof}

By Proposition~\ref{prop:expressivity-separation},
\begin{equation}
P_{\text{out}}^\top\widetilde{\Delta W}^{\star}P_{\text{in}}
\in
\mathcal{F}_{\mathrm{SMoA}}(W_0;r,K),
\end{equation}
and therefore
\begin{equation}
\inf_{\Delta\in\mathcal{F}_{\mathrm{SMoA}}(W_0;r,K)}
\left\|
\Delta - P_{\text{out}}^\top\widetilde{\Delta W}^{\star}P_{\text{in}}
\right\|_F^2
=
0.
\end{equation}

For LoRA, note that
\begin{equation}
\mathcal{F}_{\mathrm{LoRA}}(r)
=
\{X\in\mathbb{R}^{d_{\text{out}}\times d_{\text{in}}}:\operatorname{rank}(X)\le r\}.
\end{equation}
Moreover, permutation matrices preserve both the Frobenius norm and the singular values. Hence
\begin{align}
&\inf_{\Delta\in\mathcal{F}_{\mathrm{LoRA}}(r)}
\left\|
\Delta - P_{\text{out}}^\top\widetilde{\Delta W}^{\star}P_{\text{in}}
\right\|_F^2 \nonumber\\
&\qquad=
\inf_{\operatorname{rank}(X)\le r}
\left\|
X-\widetilde{\Delta W}^{\star}
\right\|_F^2.
\end{align}
By the Eckart--Young--Mirsky theorem, the right-hand side is exactly
\begin{equation}
\sum_{j>r}\sigma_j(\widetilde{\Delta W}^{\star})^2.
\end{equation}
Since
\begin{equation}
\operatorname{rank}(\widetilde{\Delta W}^{\star})>r,
\end{equation}
at least one singular value beyond the first $r$ is nonzero, and thus
\begin{equation}
\sum_{j>r}\sigma_j(\widetilde{\Delta W}^{\star})^2>0.
\end{equation}
This proves the result.

\end{proof}

\section{Training Hyperparameters}
\label{app:hyperparameter}

\begin{table*}[ht]
\caption{Hyperparameters for SMoA.}
\label{tab:smoa_hyperparameter}
\begin{adjustbox}{max width=0.85\textwidth, center}
\includegraphics[width=0.85\textwidth]{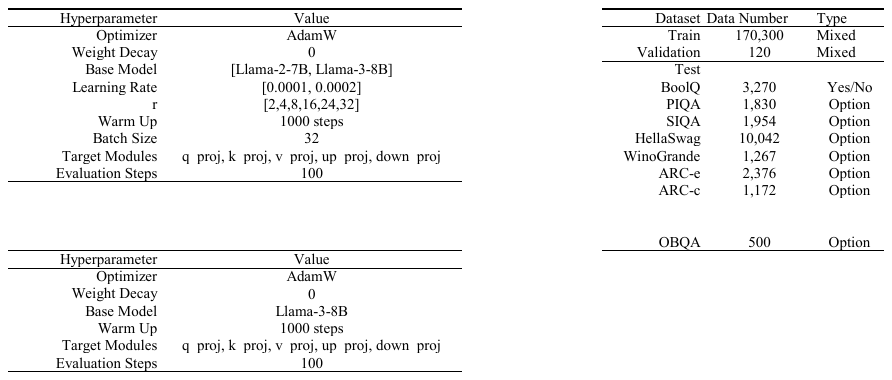}
\end{adjustbox}
\end{table*}

\begin{table*}[ht]
\caption{General hyperparameters used across all experiments.}
\label{tab:gene_hyparameters}
\begin{adjustbox}{max width=0.85\textwidth, center}
\includegraphics[width=0.85\textwidth]{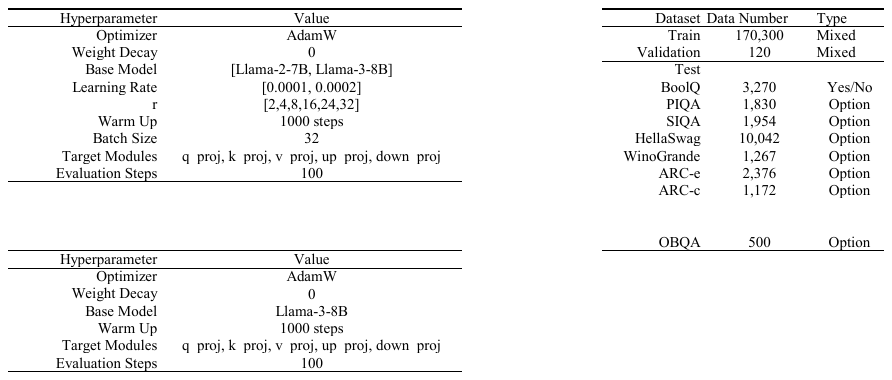}
\end{adjustbox}
\end{table*}

\begin{table*}[ht]
\caption{The detailed statistics of commonsense reasoning datasets.}
\label{tab:data_sta}
\begin{adjustbox}{max width=0.55\textwidth, center}
\includegraphics[width=0.55\textwidth]{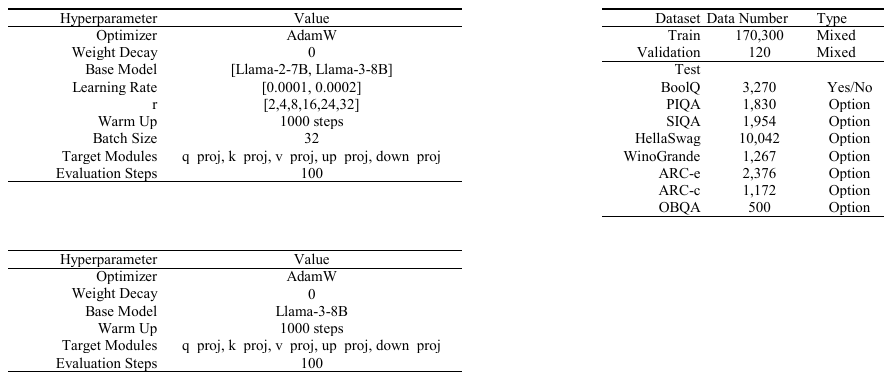}
\end{adjustbox}
\end{table*}

\section{Limitations}
\label{app:limitations}

SMoA requires a one-time spectral preprocessing step for each adapted pretrained weight matrix. This preprocessing does not add trainable parameters or inference-time latency after merging, but it introduces additional offline computation and storage for the fixed permutations and block anchors. The cost may become non-negligible for very large models or for settings where many checkpoints must be adapted independently.
\clearpage
\newpage
\input{checklist.tex}

\end{document}

%% file: checklist.tex
\section*{NeurIPS Paper Checklist}

\begin{enumerate}

\item {\bf Claims}
    \item[] Question: Do the main claims made in the abstract and introduction accurately reflect the paper's contributions and scope?
    \item[] Answer: \answerYes{} 
    \item[] Justification: Section~\ref{sec:method} introduces SMoA and its spectrum-aware block construction, Section~\ref{sec:32} formalizes the parameter count, rank-capacity, and expressivity claims, and Section~\ref{sec:experiments} reports results on different tasks. 
    \item[] Guidelines:
    \begin{itemize}
        \item The answer \answerNA{} means that the abstract and introduction do not include the claims made in the paper.
        \item The abstract and/or introduction should clearly state the claims made, including the contributions made in the paper and important assumptions and limitations. A \answerNo{} or \answerNA{} answer to this question will not be perceived well by the reviewers. 
        \item The claims made should match theoretical and experimental results, and reflect how much the results can be expected to generalize to other settings. 
        \item It is fine to include aspirational goals as motivation as long as it is clear that these goals are not attained by the paper. 
    \end{itemize}

\item {\bf Limitations}
    \item[] Question: Does the paper discuss the limitations of the work performed by the authors?
    \item[] Answer: \answerYes{} 
    \item[] Justification: The paper discusses limitations in Appendix~\ref{app:limitations}.
    \item[] Guidelines:
    \begin{itemize}
        \item The answer \answerNA{} means that the paper has no limitation while the answer \answerNo{} means that the paper has limitations, but those are not discussed in the paper. 
        \item The authors are encouraged to create a separate ``Limitations'' section in their paper.
        \item The paper should point out any strong assumptions and how robust the results are to violations of these assumptions (e.g., independence assumptions, noiseless settings, model well-specification, asymptotic approximations only holding locally). The authors should reflect on how these assumptions might be violated in practice and what the implications would be.
        \item The authors should reflect on the scope of the claims made, e.g., if the approach was only tested on a few datasets or with a few runs. In general, empirical results often depend on implicit assumptions, which should be articulated.
        \item The authors should reflect on the factors that influence the performance of the approach. For example, a facial recognition algorithm may perform poorly when image resolution is low or images are taken in low lighting. Or a speech-to-text system might not be used reliably to provide closed captions for online lectures because it fails to handle technical jargon.
        \item The authors should discuss the computational efficiency of the proposed algorithms and how they scale with dataset size.
        \item If applicable, the authors should discuss possible limitations of their approach to address problems of privacy and fairness.
        \item While the authors might fear that complete honesty about limitations might be used by reviewers as grounds for rejection, a worse outcome might be that reviewers discover limitations that aren't acknowledged in the paper. The authors should use their best judgment and recognize that individual actions in favor of transparency play an important role in developing norms that preserve the integrity of the community. Reviewers will be specifically instructed to not penalize honesty concerning limitations.
    \end{itemize}

\item {\bf Theory assumptions and proofs}
    \item[] Question: For each theoretical result, does the paper provide the full set of assumptions and a complete (and correct) proof?
    \item[] Answer: \answerYes{} 
    \item[] Justification: The main theoretical statements are stated in Section~\ref{sec:32}, and the full assumptions and proofs are provided in Appendix~\ref{app:proofs}.
    \item[] Guidelines:
    \begin{itemize}
        \item The answer \answerNA{} means that the paper does not include theoretical results. 
        \item All the theorems, formulas, and proofs in the paper should be numbered and cross-referenced.
        \item All assumptions should be clearly stated or referenced in the statement of any theorems.
        \item The proofs can either appear in the main paper or the supplemental material, but if they appear in the supplemental material, the authors are encouraged to provide a short proof sketch to provide intuition. 
        \item Inversely, any informal proof provided in the core of the paper should be complemented by formal proofs provided in appendix or supplemental material.
        \item Theorems and Lemmas that the proof relies upon should be properly referenced. 
    \end{itemize}

    \item {\bf Experimental result reproducibility}
    \item[] Question: Does the paper fully disclose all the information needed to reproduce the main experimental results of the paper to the extent that it affects the main claims and/or conclusions of the paper (regardless of whether the code and data are provided or not)?
    \item[] Answer: \answerYes{} 
    \item[] Justification: Section~\ref{sec:method} specifies the SMoA construction, and Appendices~\ref{app:data}, \ref{app:baselines}, \ref{app:evaluation_metric}, \ref{app:implementation_details}, and \ref{app:hyperparameter} describe datasets, baselines, metrics, training setup, seeds, and hyperparameters needed to reproduce the main experimental protocol.
    \item[] Guidelines:
    \begin{itemize}
        \item The answer \answerNA{} means that the paper does not include experiments.
        \item If the paper includes experiments, a \answerNo{} answer to this question will not be perceived well by the reviewers: Making the paper reproducible is important, regardless of whether the code and data are provided or not.
        \item If the contribution is a dataset and\slash or model, the authors should describe the steps taken to make their results reproducible or verifiable. 
        \item Depending on the contribution, reproducibility can be accomplished in various ways. For example, if the contribution is a novel architecture, describing the architecture fully might suffice, or if the contribution is a specific model and empirical evaluation, it may be necessary to either make it possible for others to replicate the model with the same dataset, or provide access to the model. In general. releasing code and data is often one good way to accomplish this, but reproducibility can also be provided via detailed instructions for how to replicate the results, access to a hosted model (e.g., in the case of a large language model), releasing of a model checkpoint, or other means that are appropriate to the research performed.
        \item While NeurIPS does not require releasing code, the conference does require all submissions to provide some reasonable avenue for reproducibility, which may depend on the nature of the contribution. For example
        \begin{enumerate}
            \item If the contribution is primarily a new algorithm, the paper should make it clear how to reproduce that algorithm.
            \item If the contribution is primarily a new model architecture, the paper should describe the architecture clearly and fully.
            \item If the contribution is a new model (e.g., a large language model), then there should either be a way to access this model for reproducing the results or a way to reproduce the model (e.g., with an open-source dataset or instructions for how to construct the dataset).
            \item We recognize that reproducibility may be tricky in some cases, in which case authors are welcome to describe the particular way they provide for reproducibility. In the case of closed-source models, it may be that access to the model is limited in some way (e.g., to registered users), but it should be possible for other researchers to have some path to reproducing or verifying the results.
        \end{enumerate}
    \end{itemize}

\item {\bf Open access to data and code}
    \item[] Question: Does the paper provide open access to the data and code, with sufficient instructions to faithfully reproduce the main experimental results, as described in supplemental material?
    \item[] Answer: \answerYes{} 
    \item[] Justification: We provide an anonymous code repository containing the implementation, scripts, and usage instructions for reproducing the experimental results.
    \item[] Guidelines:
    \begin{itemize}
        \item The answer \answerNA{} means that paper does not include experiments requiring code.
        \item Please see the NeurIPS code and data submission guidelines (\url{https://neurips.cc/public/guides/CodeSubmissionPolicy}) for more details.
        \item While we encourage the release of code and data, we understand that this might not be possible, so \answerNo{} is an acceptable answer. Papers cannot be rejected simply for not including code, unless this is central to the contribution (e.g., for a new open-source benchmark).
        \item The instructions should contain the exact command and environment needed to run to reproduce the results. See the NeurIPS code and data submission guidelines (\url{https://neurips.cc/public/guides/CodeSubmissionPolicy}) for more details.
        \item The authors should provide instructions on data access and preparation, including how to access the raw data, preprocessed data, intermediate data, and generated data, etc.
        \item The authors should provide scripts to reproduce all experimental results for the new proposed method and baselines. If only a subset of experiments are reproducible, they should state which ones are omitted from the script and why.
        \item At submission time, to preserve anonymity, the authors should release anonymized versions (if applicable).
        \item Providing as much information as possible in supplemental material (appended to the paper) is recommended, but including URLs to data and code is permitted.
    \end{itemize}

\item {\bf Experimental setting/details}
    \item[] Question: Does the paper specify all the training and test details (e.g., data splits, hyperparameters, how they were chosen, type of optimizer) necessary to understand the results?
    \item[] Answer: \answerYes{} 
    \item[] Justification: Appendix~\ref{app:implementation_details} specifies the adapted layers, optimizer, learning rate, warmup, epochs, validation-checkpoint selection, seeds for SMoA, and evaluation frequency; Appendix~\ref{app:hyperparameter} provides the detailed hyperparameter tables.
    \item[] Guidelines:
    \begin{itemize}
        \item The answer \answerNA{} means that the paper does not include experiments.
        \item The experimental setting should be presented in the core of the paper to a level of detail that is necessary to appreciate the results and make sense of them.
        \item The full details can be provided either with the code, in appendix, or as supplemental material.
    \end{itemize}

\item {\bf Experiment statistical significance}
    \item[] Question: Does the paper report error bars suitably and correctly defined or other appropriate information about the statistical significance of the experiments?
    \item[] Answer: \answerYes{} 
    \item[] Justification: Table~\ref{tab:commonsense} reports SMoA results as mean accuracy with standard deviation, and Appendix~\ref{app:implementation_details} states that SMoA is averaged over five random seeds. 
    \item[] Guidelines:
    \begin{itemize}
        \item The answer \answerNA{} means that the paper does not include experiments.
        \item The authors should answer \answerYes{} if the results are accompanied by error bars, confidence intervals, or statistical significance tests, at least for the experiments that support the main claims of the paper.
        \item The factors of variability that the error bars are capturing should be clearly stated (for example, train/test split, initialization, random drawing of some parameter, or overall run with given experimental conditions).
        \item The method for calculating the error bars should be explained (closed form formula, call to a library function, bootstrap, etc.)
        \item The assumptions made should be given (e.g., Normally distributed errors).
        \item It should be clear whether the error bar is the standard deviation or the standard error of the mean.
        \item It is OK to report 1-sigma error bars, but one should state it. The authors should preferably report a 2-sigma error bar than state that they have a 96\% CI, if the hypothesis of Normality of errors is not verified.
        \item For asymmetric distributions, the authors should be careful not to show in tables or figures symmetric error bars that would yield results that are out of range (e.g., negative error rates).
        \item If error bars are reported in tables or plots, the authors should explain in the text how they were calculated and reference the corresponding figures or tables in the text.
    \end{itemize}

\item {\bf Experiments compute resources}
    \item[] Question: For each experiment, does the paper provide sufficient information on the computer resources (type of compute workers, memory, time of execution) needed to reproduce the experiments?
    \item[] Answer: \answerYes{} 
    \item[] Justification: This paper reports the GPU type used for all experiments in the Appendix~\ref{app:implementation_details}.
    \item[] Guidelines:
    \begin{itemize}
        \item The answer \answerNA{} means that the paper does not include experiments.
        \item The paper should indicate the type of compute workers CPU or GPU, internal cluster, or cloud provider, including relevant memory and storage.
        \item The paper should provide the amount of compute required for each of the individual experimental runs as well as estimate the total compute. 
        \item The paper should disclose whether the full research project required more compute than the experiments reported in the paper (e.g., preliminary or failed experiments that didn't make it into the paper). 
    \end{itemize}
    
\item {\bf Code of ethics}
    \item[] Question: Does the research conducted in the paper conform, in every respect, with the NeurIPS Code of Ethics \url{https://neurips.cc/public/EthicsGuidelines}?
    \item[] Answer: \answerYes{} 
    \item[] Justification: We adhere to all the ethics guidelines.
    \item[] Guidelines:
    \begin{itemize}
        \item The answer \answerNA{} means that the authors have not reviewed the NeurIPS Code of Ethics.
        \item If the authors answer \answerNo, they should explain the special circumstances that require a deviation from the Code of Ethics.
        \item The authors should make sure to preserve anonymity (e.g., if there is a special consideration due to laws or regulations in their jurisdiction).
    \end{itemize}

\item {\bf Broader impacts}
    \item[] Question: Does the paper discuss both potential positive societal impacts and negative societal impacts of the work performed?
    \item[] Answer: \answerNo{} 
    \item[] Justification: The paper focuses on a PEFT algorithm and does not include a dedicated broader-impact discussion of potential positive and negative societal effects. 
    \item[] Guidelines:
    \begin{itemize}
        \item The answer \answerNA{} means that there is no societal impact of the work performed.
        \item If the authors answer \answerNA{} or \answerNo, they should explain why their work has no societal impact or why the paper does not address societal impact.
        \item Examples of negative societal impacts include potential malicious or unintended uses (e.g., disinformation, generating fake profiles, surveillance), fairness considerations (e.g., deployment of technologies that could make decisions that unfairly impact specific groups), privacy considerations, and security considerations.
        \item The conference expects that many papers will be foundational research and not tied to particular applications, let alone deployments. However, if there is a direct path to any negative applications, the authors should point it out. For example, it is legitimate to point out that an improvement in the quality of generative models could be used to generate Deepfakes for disinformation. On the other hand, it is not needed to point out that a generic algorithm for optimizing neural networks could enable people to train models that generate Deepfakes faster.
        \item The authors should consider possible harms that could arise when the technology is being used as intended and functioning correctly, harms that could arise when the technology is being used as intended but gives incorrect results, and harms following from (intentional or unintentional) misuse of the technology.
        \item If there are negative societal impacts, the authors could also discuss possible mitigation strategies (e.g., gated release of models, providing defenses in addition to attacks, mechanisms for monitoring misuse, mechanisms to monitor how a system learns from feedback over time, improving the efficiency and accessibility of ML).
    \end{itemize}
    
\item {\bf Safeguards}
    \item[] Question: Does the paper describe safeguards that have been put in place for responsible release of data or models that have a high risk for misuse (e.g., pre-trained language models, image generators, or scraped datasets)?
    \item[] Answer: \answerNA{} 
    \item[] Justification: This paper poses no such risks.
    \item[] Guidelines:
    \begin{itemize}
        \item The answer \answerNA{} means that the paper poses no such risks.
        \item Released models that have a high risk for misuse or dual-use should be released with necessary safeguards to allow for controlled use of the model, for example by requiring that users adhere to usage guidelines or restrictions to access the model or implementing safety filters. 
        \item Datasets that have been scraped from the Internet could pose safety risks. The authors should describe how they avoided releasing unsafe images.
        \item We recognize that providing effective safeguards is challenging, and many papers do not require this, but we encourage authors to take this into account and make a best faith effort.
    \end{itemize}

\item {\bf Licenses for existing assets}
    \item[] Question: Are the creators or original owners of assets (e.g., code, data, models), used in the paper, properly credited and are the license and terms of use explicitly mentioned and properly respected?
    \item[] Answer: \answerYes{} 
    \item[] Justification: Existing datasets and code are properly cited.
    \item[] Guidelines:
    \begin{itemize}
        \item The answer \answerNA{} means that the paper does not use existing assets.
        \item The authors should cite the original paper that produced the code package or dataset.
        \item The authors should state which version of the asset is used and, if possible, include a URL.
        \item The name of the license (e.g., CC-BY 4.0) should be included for each asset.
        \item For scraped data from a particular source (e.g., website), the copyright and terms of service of that source should be provided.
        \item If assets are released, the license, copyright information, and terms of use in the package should be provided. For popular datasets, \url{paperswithcode.com/datasets} has curated licenses for some datasets. Their licensing guide can help determine the license of a dataset.
        \item For existing datasets that are re-packaged, both the original license and the license of the derived asset (if it has changed) should be provided.
        \item If this information is not available online, the authors are encouraged to reach out to the asset's creators.
    \end{itemize}

\item {\bf New assets}
    \item[] Question: Are new assets introduced in the paper well documented and is the documentation provided alongside the assets?
    \item[] Answer: \answerNA{} 
    \item[] Justification:  Paper does not exist new assets.
    \item[] Guidelines:
    \begin{itemize}
        \item The answer \answerNA{} means that the paper does not release new assets.
        \item Researchers should communicate the details of the dataset\slash code\slash model as part of their submissions via structured templates. This includes details about training, license, limitations, etc. 
        \item The paper should discuss whether and how consent was obtained from people whose asset is used.
        \item At submission time, remember to anonymize your assets (if applicable). You can either create an anonymized URL or include an anonymized zip file.
    \end{itemize}

\item {\bf Crowdsourcing and research with human subjects}
    \item[] Question: For crowdsourcing experiments and research with human subjects, does the paper include the full text of instructions given to participants and screenshots, if applicable, as well as details about compensation (if any)? 
    \item[] Answer: \answerNA{} 
    \item[] Justification: This paper does not involve human subjects.
    \item[] Guidelines:
    \begin{itemize}
        \item The answer \answerNA{} means that the paper does not involve crowdsourcing nor research with human subjects.
        \item Including this information in the supplemental material is fine, but if the main contribution of the paper involves human subjects, then as much detail as possible should be included in the main paper. 
        \item According to the NeurIPS Code of Ethics, workers involved in data collection, curation, or other labor should be paid at least the minimum wage in the country of the data collector. 
    \end{itemize}

\item {\bf Institutional review board (IRB) approvals or equivalent for research with human subjects}
    \item[] Question: Does the paper describe potential risks incurred by study participants, whether such risks were disclosed to the subjects, and whether Institutional Review Board (IRB) approvals (or an equivalent approval/review based on the requirements of your country or institution) were obtained?
    \item[] Answer: \answerNA{} 
    \item[] Justification: This paper does not involve crowdsourcing or human subjects.
    \item[] Guidelines:
    \begin{itemize}
        \item The answer \answerNA{} means that the paper does not involve crowdsourcing nor research with human subjects.
        \item Depending on the country in which research is conducted, IRB approval (or equivalent) may be required for any human subjects research. If you obtained IRB approval, you should clearly state this in the paper. 
        \item We recognize that the procedures for this may vary significantly between institutions and locations, and we expect authors to adhere to the NeurIPS Code of Ethics and the guidelines for their institution. 
        \item For initial submissions, do not include any information that would break anonymity (if applicable), such as the institution conducting the review.
    \end{itemize}

\item {\bf Declaration of LLM usage}
    \item[] Question: Does the paper describe the usage of LLMs if it is an important, original, or non-standard component of the core methods in this research? Note that if the LLM is used only for writing, editing, or formatting purposes and does \emph{not} impact the core methodology, scientific rigor, or originality of the research, declaration is not required.
    \item[] Answer: \answerNA{} 
    \item[] Justification: This paper does not involve LLMs in any meaningful capacity.
    \item[] Guidelines:
    \begin{itemize}
        \item The answer \answerNA{} means that the core method development in this research does not involve LLMs as any important, original, or non-standard components.
        \item Please refer to our LLM policy in the NeurIPS handbook for what should or should not be described.
    \end{itemize}

\end{enumerate}